\newcommand{\cmark}{\ding{51}}%
\newcommand{\xmark}{\ding{55}}%
\newcommand{\mbf}[1]{\mathbf{#1}}
\newcommand{\mbb}[1]{\mathbb{#1}}
\newcommand{\ud}{\mathrm{d}}
\newcommand{\mcal}{\mathcal}
\newcommand{\norm}[1]{\left\lVert#1\right\rVert}
\newtheorem{lemma}{Lemma}
\newtheorem{theorem}{Theorem}
\newtheorem{example}{Example}
\newtheorem{proposition}{Proposition}
\newenvironment{customthm}[1]
{\innercustomthm}
{\endinnercustomthm}
\newenvironment{customprop}[1]
{\innercustomprop}
{\endinnercustomprop}
\newcommand{\be}{\begin{equation}}
	\newcommand{\ee}{\end{equation}}
\definecolor{Gray}{gray}{0.85}
\definecolor{LightCyan}{rgb}{0.88,1,1}
\newcolumntype{a}{>{\columncolor{Gray}}c}
\newcolumntype{b}{>{\columncolor{white}}c}
\def\@onedot{\ifx\@let@token.\else.\null\fi\xspace}
\DeclareRobustCommand\onedot{\futurelet\@let@token\@onedot}
\newcommand{\eqnref}[1]{Eq\onedot~\eqref{#1}}
\newcommand{\figref}[1]{Fig\onedot~\ref{#1}}
\newcommand{\algoref}[1]{Algorithm~\ref{#1}}
\newcommand{\secref}[1]{Section~\ref{#1}}
\newcommand{\tabref}[1]{Tab\onedot~\ref{#1}}
\newcommand{\thmref}[1]{Theorem~\ref{#1}}
\newcommand{\lemref}[1]{Lemma~\ref{#1}}
\newcommand{\propref}[1]{Proposition~\ref{#1}}
\newcommand{\bfx}{\mathbf{x}}
\newcommand{\bfA}{\mathbf{A}}
\newcommand{\bfB}{\mathbf{B}}
\newcommand{\bfW}{\mathbf{W}}
\newcommand{\bfV}{\mathbf{V}}
\newcommand{\bfM}{\mathbf{M}}
\newcommand{\bfb}{\mathbf{b}}
\newcommand{\bfz}{\mathbf{z}}
\newcommand{\bft}{\mathbf{t}}
\newcommand{\bfzero}{\mathbf{0}}
\newcommand{\frakm}{\mathfrak{L}}
\newcommand{\bftheta}{{\boldsymbol{\theta}}}
\newcommand{\bfy}{\mathbf{y}}
\def\eg{\emph{e.g}\onedot}
\def\ie{\emph{i.e}\onedot}
\def\cf{\emph{cf}\onedot}
\def\wrt{w.r.t\onedot}
\title{MintNet: Building Invertible Neural Networks with Masked Convolutions}
\author{%
  Yang Song\thanks{Equal contribution.} \\
  Stanford University \\
  \texttt{yangsong@cs.stanford.edu} \\
  \And
  Chenlin Meng\footnotemark[1] \\
  Stanford University \\
  \texttt{chenlin@cs.stanford.edu} \\
  \And
  Stefano Ermon \\
  Stanford University \\
  \texttt{ermon@cs.stanford.edu} \\
}
\begin{document}
\maketitle
\begin{abstract}
We propose a new way of constructing invertible neural networks by combining simple building blocks with a novel set of composition rules. This leads to a rich set of invertible architectures, including those similar to ResNets. Inversion is achieved with a locally convergent iterative procedure that is parallelizable and very fast in practice. 
Additionally, the determinant of the Jacobian can be computed analytically and efficiently, enabling their generative use as flow models. 
To demonstrate their flexibility, we show that our invertible neural networks are competitive with ResNets on MNIST and CIFAR-10 classification. 
When trained as generative models, our invertible networks achieve competitive likelihoods on MNIST, CIFAR-10 and ImageNet 32$\times$32, with bits per dimension of 0.98, 3.32 and 4.06 respectively.
\end{abstract}
\section{Introduction}
Invertible neural networks
have many applications in machine learning. 
They have been employed to investigate representations of deep classifiers~\cite{jacobsen2018irevnet}, understand the cause of adversarial examples~\cite{jacobsen2018excessive}, learn transition operators for MCMC~\cite{song2017nice,levy2018generalizing}, create generative models that are directly trainable by maximum likelihood~\cite{nvp,dinh2016density,maf,glow,FFJORD,i-resnet}, and perform approximate inference~\cite{rezende15variational,kingma2016iaf}.

Many applications of invertible neural networks require that both inverting the network and computing the Jacobian determinant be efficient.
While typical neural networks are not invertible, achieving these properties often imposes restrictive constraints
to the architecture. 
For example, planar flows~\cite{rezende15variational} and Sylvester flow~\cite{berg2018sylvester} constrain the number of hidden units to be smaller than the input dimension. NICE~\cite{dinh2016density} and Real NVP~\cite{nvp} rely on dimension partitioning heuristics and specific architectures such as coupling layers, which could make training more difficult~\cite{i-resnet}. Methods like FFJORD~\cite{FFJORD}, i-ResNets~\cite{i-resnet} have fewer architectural constraints. However, their Jacobian determinants have to be approximated, which is problematic if repeatedly performed at training time as in flow models.

In this paper, we propose a new method of constructing invertible neural networks which are flexible, efficient to invert, and whose Jacobian can be computed \emph{exactly} and efficiently.
We use triangular matrices as our basic module. %
Then, we provide a set of composition rules to recursively build more complex non-linear modules from the basic module, and show that the composed modules are invertible as long as their Jacobians are non-singular. As in previous work~\cite{nvp,maf},
the Jacobians of our modules are triangular, allowing efficient determinant computation. The inverse of these modules can be obtained by an efficiently parallelizable fixed-point iteration method, making the cost of inversion comparable to that of an i-ResNet~\cite{i-resnet} block. %

Using our composition rules and masked convolutions as the basic triangular building block, we construct a rich set of invertible modules to form a deep invertible neural network. The architecture of our proposed invertible network closely follows that of ResNet~\cite{he2016deep}---the state-of-the-art architecture of discriminative learning. We call our model \textbf{M}asked \textbf{In}ver\textbf{t}ible \textbf{Net}work (MintNet). To demonstrate the capacity of MintNets, we first test them on image classification. We found that a MintNet classifier achieves 99.6\% accuracy on MNIST, matching the performance of a ResNet with a similar architecture. On CIFAR-10, it achieves 91.2\% accuracy, comparable to the 92.6\% accuracy of ResNet. When using MintNets as generative models, they achieve the new state-of-the-art results of bits per dimension (bpd) on uniformly dequantized images. Specifically, MintNet achieves bpd values of 0.98, 3.32, and 4.06 on MNIST, CIFAR-10 and ImageNet 32$\times$32, while former best published results are 0.99 (FFJORD~\cite{FFJORD}), 3.35 (Glow~\cite{glow}) and 4.09 (Glow) respectively.
Moreover, MintNet uses fewer parameters and less computational resources. Our MNIST model uses 30\% fewer parameters than FFJORD~\cite{FFJORD}. For CIFAR-10 and ImageNet 32$\times$32, MintNet uses 60\% and 74\% fewer parameters than the corresponding Glow~\cite{glow} models.
When training on dataset such as CIFAR-10, MintNet required 2 GPUs for approximately 5 days, while FFJORD~\cite{FFJORD} used 6 GPUs for approximately 5 days, and Glow~\cite{glow} used 8 GPUs for approximately 7 days. %

\section{Background}
Consider a neural network $f: \mbb{R}^D \rightarrow \mbb{R}^L$ that maps a data point $\bfx \in \mbb{R}^D$ to a latent representation $\bfz \in \mbb{R}^L$. When for every $\bfz \in \mbb{R}^L$ there exists a unique $\bfx \in \mbb{R}^D$ such that $f(\bfx) = \bfz$, we call $f$ an invertible neural network. There are several basic properties of invertible networks. First, when $f(\bfx)$ is continuous, a necessary condition for $f$ to be invertible is $D = L$. Second, if $f_1: \mbb{R}^D \rightarrow \mbb{R}^D$ and $f_2: \mbb{R}^D \rightarrow \mbb{R}^D$ are both invertible, $f = f_2 \circ f_1$ will also be invertible. In this work, we mainly consider applications of invertible neural networks to classification and generative modeling.

\subsection{Classification with invertible neural networks}

Neural networks for classification are usually not invertible
because the number of classes $L$ is usually different from the input dimension $D$. Therefore, when discussing invertible neural networks for classification, we separate the classifier into two parts $f = f_2 \circ f_1$: feature extraction $\bfz = f_1(\bfx)$ and classification $\bfy = f_2(\bfz)$, where $f_2$ is usually the softmax function. We say the classifier is invertible when $f_1$ is invertible.
Invertible classifiers are arguably more interpretable, because a prediction can be traced down by inverting latent representations~\cite{jacobsen2018irevnet, jacobsen2018excessive}.

\subsection{Generative modeling with invertible neural networks}
An invertible network $f: \bfx \in \mbb{R}^D \mapsto \bfz \in \mbb{R}^D$ can be used to warp a complex probability density $p(\bfx)$ to a simple base distribution $\pi(\bfz)$ (\eg, a multivariate standard Gaussian)~\cite{dinh2016density,nvp}.
Under the condition that both $f$ and $f^{-1}$ are differentiable, the densities of $p(\bfx)$ and $\pi(\bfz)$ are related by the following change of variable formula
\begin{align}
    \log p(\bfx) = \log \pi(\bfz) + \log |\operatorname{det}( J_f(\bfx) )|, \label{eqn:flow}
\end{align}
where $J_f(\bfx)$ denotes the Jacobian of $f(\bfx)$ and we require $J_f(\bfx)$ to be non-singular so that $\log |\operatorname{det}( J_f(\bfx) )|$ is well-defined. Using this formula, $p(\bfx)$ can be easily computed if the Jacobian determinant $\operatorname{det}(J_f(\bfx))$ is cheaply computable and $\pi(\bfz)$ is known.

Therefore, an invertible neural network $f_\bftheta(\bfx)$ implicitly defines a normalized density model $p_\bftheta(\bfx)$, which can be directly trained by maximum likelihood. The invertibility of $f_\bftheta$ is critical to fast sample generation. Specifically, in order to generate a sample $\bfx$ from $p_\bftheta(\bfx)$, we can first draw $\bfz \sim \pi(\bfz)$, and warp it back through the inverse of $f_\bftheta$ to obtain $\bfx = f_\bftheta^{-1}(\bfz)$.

Note that multiple invertible models $f_1, f_2, \cdots, f_K$ can be stacked together to form a deeper invertible model $f = f_K \circ \cdots \circ f_2 \circ f_1$, without much impact on the inverse and determinant computation. This is because we can sequentially invert each component, \ie, $f^{-1} = f_1^{-1} \circ f_2^{-1} \circ \cdots \circ f_K^{-1}$, and the total Jacobian determinant equals the product of each individual Jacobian determinant, \ie, $|\operatorname{det}(J_f)| = |\operatorname{det}(J_{f_1})||\operatorname{det}(J_{f_2})|\cdots |\operatorname{det}(J_{f_K})|$.

\section{Building invertible modules compositionally}\label{sec:composition}
In this section, we discuss how simple blocks like masked convolutions can be composed to build invertible modules that allow efficient, parallelizable inversion and determinant computation. To this end, we first introduce the basic building block of our models. Then, we propose a set of composition rules to recursively build up complex non-linear modules with triangular Jacobians. Next, we prove that these composed modules are invertible as long as their Jacobians are non-singular. Finally, we discuss how these modules can be inverted efficiently using numerical methods.

\subsection{The basic module}\label{sec:basic}
We start from considering linear transformations $f(\bfx) = \bfW \bfx + \bfb$, with $\bfW \in \mbb{R}^{D\times D}$, and $\bfb \in \mbb{R}^D$. %
For a general $\bfW$, computing its Jacobian determinant requires $O(D^3)$ operations. We therefore choose $\bfW$ to be a triangular matrix. In this case, the Jacobian determinant $\operatorname{det}(J_f(\bfx)) = \operatorname{det}(\bfW)$ is the product of all diagonal entries of $\bfW$, and the computational complexity is reduced to $O(D)$. The linear function $f(\bfx) = \bfW\bfx + \bfb$ with $\bfW$ being triangular is our \emph{basic module}. %

\begin{figure}
    \centering
    \includegraphics[width=0.9 \textwidth]{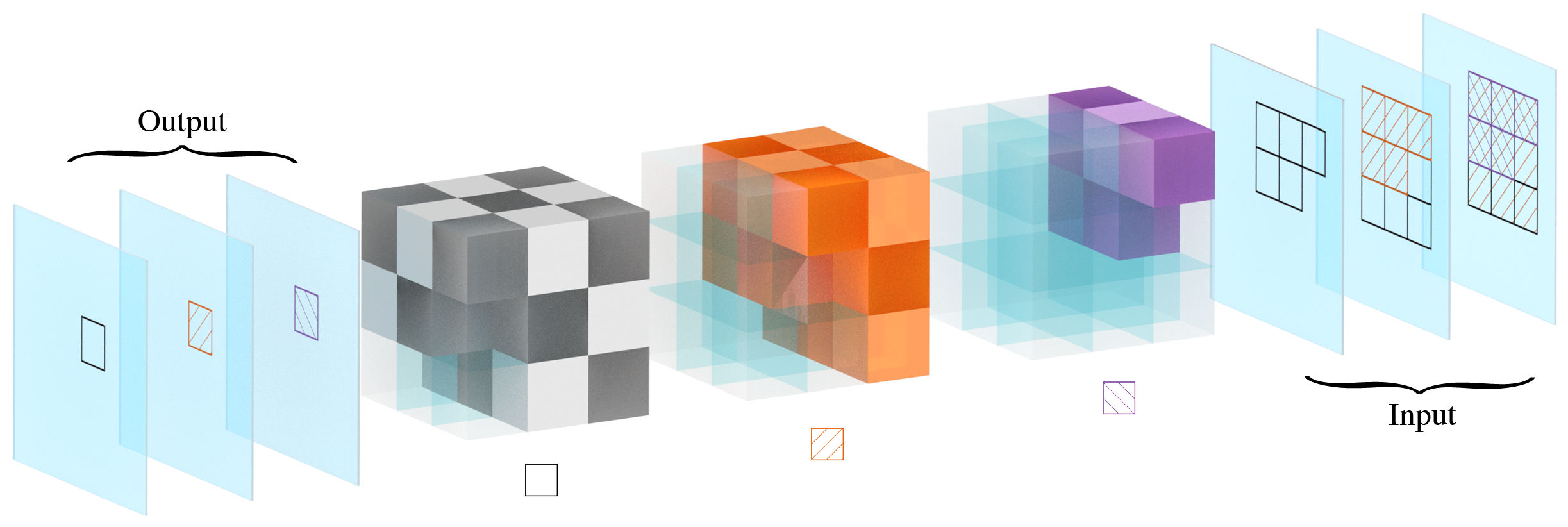}
    \caption{Illustration of a masked convolution with 3 filters and kernel size $3 \times 3$. Solid checkerboard cubes inside each filter represent unmasked weights, while the transparent blue blocks represent the weights that have been masked out. The receptive field of each filter on the input feature maps is indicated by regions shaded with the pattern (the colored square) below the corresponding filter.}
    \label{fig:masked_conv}
\end{figure}

\paragraph{Masked convolutions.} Convolution is a special type of linear transformation that is very effective for image data. The triangular structure of the basic module can be achieved using \emph{masked} convolutions (\eg, causal convolutions in PixelCNN~\cite{oord2016pixel}). We provide the formula of our masks in Appendix~\ref{app:masked_conv} and  an illustration of a $3\times3$ masked convolution with $3$ filters in \figref{fig:masked_conv}. Intuitively, the causal structure of the filters (ordering of the pixels) enforces a triangular structure.

\subsection{The calculus of building invertible modules}\label{sec:calculus}
Complex non-linear invertible functions can be constructed from our basic modules in two steps. First, we follow several composition rules so that the composed module has a triangular Jacobian. Next, we impose appropriate constraints so that the module is invertible. To simplify the discussion, we only consider modules with lower triangular Jacobians here, and we note that it is straightforward to extend the analysis to modules with upper triangular Jacobians. 

The following proposition summarizes several rules to compositionally build new modules with triangular Jacobians using existing ones.
\begin{proposition}\label{prop:calculus}
Define $\mcal{F}$ as the set of all continuously differentiable functions whose Jacobian is lower triangular. Then $\mcal{F}$ contains the basic module in \secref{sec:basic}, and is closed under the following composition rules.
\begin{itemize}
    \item \textbf{Rule of addition}. $f_1 \in \mcal{F} \wedge f_2\in\mcal{F} \Rightarrow \lambda f_1 + \mu f_2 \in \mcal{F}$, where $\lambda, \mu \in \mbb{R}$. %
    \item \textbf{Rule of composition}. $f_1 \in \mcal{F} \wedge f_2 \in \mcal{F} \Rightarrow f_2 \circ f_1 \in \mcal{F}$. A special case is $f \in \mcal{F} \Rightarrow h \circ f \in \mcal{F}$, where $h(\cdot)$ is a continuously differentiable non-linear activation function that is applied element-wise.%
\end{itemize}
\end{proposition}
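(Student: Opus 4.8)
The plan is to reduce every claim to elementary facts about lower triangular matrices, using the characterization that $f \in \mcal{F}$ means each output coordinate $f_i$ depends only on the inputs $x_1, \dots, x_i$, equivalently $\partial f_i / \partial x_j = 0$ for all $j > i$. First I would dispose of membership of the basic module: for $f(\bfx) = \bfW\bfx + \bfb$ with $\bfW$ lower triangular, the Jacobian is the constant matrix $\bfW$, which is lower triangular by assumption, and affine maps are certainly continuously differentiable, so $f \in \mcal{F}$.

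For the \textbf{rule of addition} I would invoke linearity of differentiation: $J_{\lambda f_1 + \mu f_2}(\bfx) = \lambda J_{f_1}(\bfx) + \mu J_{f_2}(\bfx)$. Since the lower triangular matrices form a linear subspace (a linear combination of lower triangular matrices is again lower triangular, entrywise), and since a linear combination of $C^1$ functions is $C^1$, the sum $\lambda f_1 + \mu f_2$ satisfies both defining properties and lies in $\mcal{F}$.

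For the \textbf{rule of composition} the key tool is the chain rule, $J_{f_2 \circ f_1}(\bfx) = J_{f_2}(f_1(\bfx)) \, J_{f_1}(\bfx)$, combined with the algebraic fact that the product of two lower triangular matrices is again lower triangular. The latter follows from the entry formula $(AB)_{ij} = \sum_k A_{ik} B_{kj}$, where any nonzero summand forces $k \le i$ and $j \le k$, hence $j \le i$. Composition of $C^1$ maps is $C^1$, so $f_2 \circ f_1 \in \mcal{F}$. For the special case, I would observe that an element-wise activation $h$, viewed as a map $\mbb{R}^D \to \mbb{R}^D$, has a diagonal (hence lower triangular) Jacobian $\operatorname{diag}(h'(x_1), \dots, h'(x_D))$, so $h \in \mcal{F}$; then $h \circ f \in \mcal{F}$ is just an instance of the general composition rule.

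The argument is essentially bookkeeping, so there is no deep obstacle; the only point requiring a moment's care is verifying that the matrix \emph{product} stays lower triangular (the sum being easy), and checking that the $C^1$ hypotheses line up so that the Jacobian identities — linearity of the derivative and the chain rule — are valid pointwise. I would therefore state the triangular-product fact as an explicit lemma, since it is precisely what makes the composition rule work.
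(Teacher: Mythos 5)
Your proposal is correct and follows essentially the same route as the paper's proof: membership of the affine basic module is immediate, the addition rule follows from linearity of differentiation, and the composition rule follows from the chain rule together with closure of lower triangular matrices under products, with the element-wise activation handled as a diagonal-Jacobian special case. The only difference is that you make the triangular-product fact an explicit lemma, which the paper leaves implicit.
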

The proof of this proposition is straightforward and deferred to Appendix \ref{app:proof}. By repetitively applying the rules in \propref{prop:calculus}, our basic linear module can be composed to construct 
complex non-linear modules having continuous and triangular Jacobians. Note that besides our linear basic modules, other functions with triangular and continuous Jacobians can also be made more expressive using the composition rules. For example, the layers of dimension partitioning models (\eg, NICE~\cite{dinh2016density}, Real NVP~\cite{nvp}, Glow~\cite{glow}) and autoregressive flows (\eg, MAF~\cite{maf}) all have continuous and triangular Jacobians and therefore belong to $\mcal{F}$. Note that the rule of addition in \propref{prop:calculus} preserves triangular Jacobians but not invertibility. Therefore, we need additional constraints if we want the composed functions to be invertible.

\begin{figure}
    \centering
    \includegraphics[width=0.9 \textwidth]{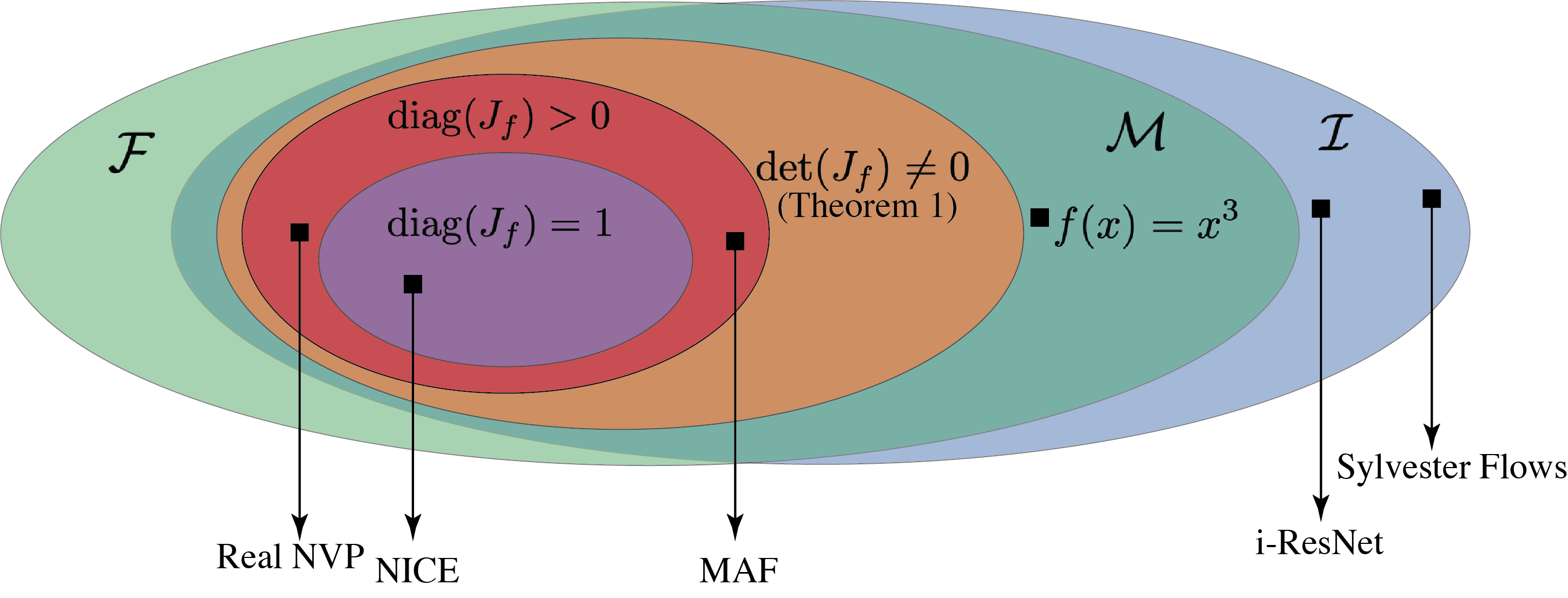}
    \caption{Venn Diagram relationships between invertible functions ($\mcal{I}$), the function sets of $\mcal{F}$ and $\mcal{M}$, functions that meet the conditions of \thmref{thm:invert} ($\operatorname{det}(J_f) \neq 0$), functions whose Jacobian is triangular and Jacobian diagonals are strictly positive ($\operatorname{diag}(J_f) > 0$), functions whose Jacobian is triangular and Jacobian diagonals are all 1s ($\operatorname{diag}(J_f) = 1$).}
    \label{fig:venn}
\end{figure}

Next, we state the condition for $f \in \mcal{F}$ to be invertible, and denote the invertible subset of $\mcal{F}$ as $\mcal{M}$. %
\begin{theorem}\label{thm:invert}
If $f\in\mcal{F}$ and $J_f(\bfx)$ is non-singular for all $\bfx$ in the domain, then $f$ is invertible.
\end{theorem}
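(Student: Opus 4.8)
The plan is to exploit the triangular structure of the Jacobian to reduce the $D$-dimensional invertibility question to a sequence of one-dimensional monotonicity arguments. Writing $f = (f_1, \ldots, f_D)$ and recalling that $J_f$ is lower triangular, the vanishing of every super-diagonal entry $\partial f_i / \partial x_j = 0$ for $j > i$ means that each output coordinate $f_i$ depends only on $x_1, \ldots, x_i$. Moreover, since a triangular determinant is the product of its diagonal entries, the hypothesis $\det(J_f(\bfx)) \neq 0$ is equivalent to requiring that every diagonal entry $\partial f_i / \partial x_i$ be nonzero at every point of the domain.

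First I would prove injectivity by induction on the coordinate index. For the base case, $f_1$ is a continuously differentiable function of the single variable $x_1$ whose derivative $\partial f_1 / \partial x_1$ never vanishes; by the mean value theorem it cannot take the same value twice, so $f_1$ is injective in $x_1$. For the inductive step, suppose $f(\bfx) = f(\bfx')$ and that we have already shown $x_1 = x_1', \ldots, x_{i-1} = x_{i-1}'$. Because $f_i$ depends only on the first $i$ coordinates, the $i$-th equation becomes $f_i(x_1, \ldots, x_{i-1}, x_i) = f_i(x_1, \ldots, x_{i-1}, x_i')$; viewing both sides through the single-variable map $t \mapsto f_i(x_1, \ldots, x_{i-1}, t)$, whose derivative is the nonvanishing diagonal entry, the same mean-value argument forces $x_i = x_i'$. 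Carrying the induction through all $D$ coordinates yields $\bfx = \bfx'$, so $f$ is injective and hence invertible onto its image, with a well-defined inverse $f^{-1}$.

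The main obstacle is making the one-dimensional monotonicity step rigorous without hidden assumptions. Applying the mean value theorem to $t \mapsto f_i(x_1, \ldots, x_{i-1}, t)$ requires the axis-aligned segment joining the two candidate preimages to lie entirely in the domain; this is automatic when the domain is all of $\mbb{R}^D$ (or any box), which is the setting of interest, but would need care on a non-convex domain. A more delicate point is what sense of ``invertible'' is intended: strict monotonicity guarantees that each $t \mapsto f_i(x_1, \ldots, x_{i-1}, t)$ is a homeomorphism onto an open interval, but that interval need not be all of $\mbb{R}$ unless the diagonal maps are unbounded in both directions, so surjectivity onto $\mbb{R}^D$ does not follow from the stated hypotheses alone. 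I therefore expect the clean and robust claim to be global injectivity---equivalently, that $f$ is a diffeomorphism onto its (open) image---which is exactly what the change-of-variable formula in \eqnref{eqn:flow} and the fixed-point inversion scheme require.
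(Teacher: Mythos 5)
Your proposal is correct and follows essentially the same route as the paper's proof: triangularity implies each $[f(\bfx)]_i$ depends only on $\bfx_1,\ldots,\bfx_i$ (the paper's Lemma~\ref{condition1}), and a coordinate-by-coordinate induction using the nonvanishing diagonal entries $[J_f(\bfx)]_{ii}$ yields uniqueness of preimages (the paper's Lemma~\ref{condition2} plus induction). The only cosmetic differences are that you invoke the mean value theorem directly where the paper first establishes global sign-constancy of each diagonal entry via the intermediate value theorem, and your closing caveat about surjectivity is consistent with the paper, whose own proof likewise establishes unique preimages only for $\bfz$ in the range of $f$.
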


\begin{proof}
A proof can be found in Appendix~\ref{app:proof}.
\end{proof}

The non-singularity of $J_f(\bfx)$ constraint in \thmref{thm:invert} is natural in the context of generative modeling. This is because in order for \eqnref{eqn:flow} to make sense, $\log |\operatorname{det}(J_f)|$ has to be well-defined, which requires $J_f(\bfx)$ to be non-singular.

In many cases, \thmref{thm:invert} can be easily used to check and enforce the invertibility of $f \in \mcal{F}$. For example, the layers of autoregressive flow models and dimension partitioning models can all be viewed as elements of $\mcal{F}$ because they are continuously differentiable and have triangular Jacobians. Since the diagonal entries of their Jacobians are always strictly positive and hence non-singular, we can immediately conclude that they are invertible with \thmref{thm:invert}, thus generalizing their model-specific proofs of invertibility. 

In \figref{fig:venn}, we provide a Venn Diagram to illustrate the set of functions that satisfy the condition of \thmref{thm:invert}. As depicted by the orange set labeled by $\operatorname{det}(J_f) \neq 0$, \thmref{thm:invert} captures a subset of $\mcal{M}$ where the Jacobians of functions are non-singular so that the change of variable formula is usable. Note the condition in Theorem 1 is sufficient but not necessary.
For example, $f(x) = x^3 \in \mcal{M}$ is invertible, but $J_f(x = 0) = 3x^2 |_{x=0} = 0$ is singular. Many previous invertible models with special architectures, such as NICE, Real NVP, and MAF, can be viewed as elements belonging to subsets of $\operatorname{det}(J_f) \neq 0$.

\subsection{Efficient inversion of the invertible modules}
In this section, we show that when the conditions in \thmref{thm:invert} hold, not only do we know that $f$ is invertible ($f \in \mcal{M}$), but also we have a fixed-point iteration method to invert $f$ with strong theoretical guarantees and good performance in practice.

\begin{algorithm}
	\caption{Fixed-point iteration method for computing $f^{-1}(\bfz)$.}
	\label{alg:solver}
	\begin{algorithmic}[1]
	    \Require{$T, \alpha$} \Comment{$T$ is the number of iterations; $0<\alpha <2$ is the step size.}
	    \State{Initialize $\bfx_0$}
        \For{$t \gets 1$ to $T$}
            \State{Compute $f(\bfx_{t-1})$}
            \State{Compute $\operatorname{diag}(J_f(\bfx_{t-1}))$}
            \State{$\bfx_t \gets \bfx_{t-1} - \alpha \operatorname{diag}(J_f(\bfx_{t-1}))^{-1}(f(\bfx_{t-1}) - \bfz)$}
        \EndFor
        \item[]
        \Return{$\bfx_T$}
	\end{algorithmic}
\end{algorithm}

The pseudo-code of our proposed inversion algorithm is described in \algoref{alg:solver}. Theoretically, we can prove that this method is locally convergent---as long as the initial value is close to the true value, the method is guaranteed to find the correct inverse. We formally summarize this result in \thmref{thm:converge}.
\begin{theorem}\label{thm:converge}
The iterative method of \algoref{alg:solver} is locally convergent whenever $0 < \alpha < 2$.
\end{theorem}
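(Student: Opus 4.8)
The plan is to recast the update in \algoref{alg:solver} as a fixed-point iteration $\bfx_t = g(\bfx_{t-1})$ with
\[
g(\bfx) = \bfx - \alpha \operatorname{diag}(J_f(\bfx))^{-1}\big(f(\bfx) - \bfz\big),
\]
and then invoke the classical point-of-attraction result for such iterations (Ostrowski's theorem): if $g$ is continuously differentiable near a fixed point $\bfx^*$ and the spectral radius $\rho(J_g(\bfx^*)) < 1$, then $\bfx^*$ attracts the iteration and the local convergence rate is linear. The whole argument thus reduces to three steps: identify the fixed point, compute $J_g$ there, and bound its spectral radius.

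First I would verify that $\bfx^* := f^{-1}(\bfz)$ is the unique fixed point. By \thmref{thm:invert}, $f \in \mcal{M}$ is invertible, so $\bfx^*$ exists and is unique, and $J_f(\bfx^*)$ is non-singular; since $J_f$ is lower triangular (because $f \in \mcal{F}$), non-singularity forces every diagonal entry of $J_f(\bfx^*)$ to be non-zero. Hence $\operatorname{diag}(J_f(\bfx^*))$ is invertible, and by continuity of $J_f$ it stays invertible on a neighborhood of $\bfx^*$, so $g$ is well-defined and continuously differentiable there. At $\bfx^*$ we have $f(\bfx^*) - \bfz = \bfzero$, so $g(\bfx^*) = \bfx^*$, confirming it is a fixed point.

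Next I would differentiate $g$ at $\bfx^*$. Writing $D(\bfx) := \operatorname{diag}(J_f(\bfx))$, the product rule produces a term in which the derivative of $D(\bfx)^{-1}$ multiplies the residual $(f(\bfx) - \bfz)$; this term vanishes at $\bfx^*$ precisely because the residual is $\bfzero$ there. The only surviving contribution is
\[
J_g(\bfx^*) = \bfI - \alpha\, D(\bfx^*)^{-1} J_f(\bfx^*).
\]
Because $J_f(\bfx^*)$ is lower triangular and $D(\bfx^*)$ is exactly its diagonal, $D(\bfx^*)^{-1} J_f(\bfx^*)$ is lower triangular with all diagonal entries equal to $1$. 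Consequently $J_g(\bfx^*)$ is lower triangular with every diagonal entry equal to $1 - \alpha$, and since the eigenvalues of a triangular matrix are its diagonal entries, $\rho(J_g(\bfx^*)) = |1 - \alpha|$.

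Finally, $|1 - \alpha| < 1$ holds if and only if $0 < \alpha < 2$, so the spectral-radius condition is satisfied exactly on the stated interval, and the point-of-attraction theorem yields local convergence. The step I would be most careful about is the Jacobian computation at $\bfx^*$: one must justify that the awkward $\partial D(\bfx)^{-1}/\partial \bfx$ terms drop out (they do, being multiplied by the vanishing residual $f(\bfx^*) - \bfz = \bfzero$), and one must exploit the triangular structure of $J_f$ to read off the eigenvalues cleanly. Everything else is routine.
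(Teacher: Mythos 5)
Your proposal is correct and follows essentially the same route as the paper's proof: both recast the update as a fixed-point iteration $g(\bfx) = \bfx - \alpha \operatorname{diag}(J_f(\bfx))^{-1}(f(\bfx)-\bfz)$, use \thmref{thm:invert} to identify the unique fixed point $\bfx^* = f^{-1}(\bfz)$, compute $J_g(\bfx^*) = \bfI - \alpha \operatorname{diag}(J_f(\bfx^*))^{-1} J_f(\bfx^*)$, read off the sole eigenvalue $1-\alpha$ from the triangular structure, and invoke Ostrowski's theorem to conclude local convergence when $|1-\alpha|<1$. Your write-up is in fact slightly more careful than the paper's in justifying that $\operatorname{diag}(J_f)$ is invertible near $\bfx^*$ and that the derivative-of-$\operatorname{diag}(J_f(\bfx))^{-1}$ term drops out against the vanishing residual.
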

\begin{proof}%
We provide a more rigorous proof in Appendix~\ref{app:proof}.
\end{proof}

In practice, the method is also easily parallelizable on GPUs, making the cost of inverting $f\in\mcal{M}$ similar to that of an i-ResNet~\cite{i-resnet} layer. Within each iteration, the computation is mostly matrix operations that can be vectorized and run efficiently in parallel. Therefore, the time cost will be roughly proportional to the number of iterations, \ie, $O(T)$. As will be shown in our experiments, \algoref{alg:solver} converges fast and usually the error quickly becomes negligible when $T \ll D$. This is in stark contrast to existing methods of inverting autoregressive flow models such as MAF~\cite{maf}, where $D$ univariate equations need to be solved sequentially, requiring at least $O(D)$ iterations. There are also other approaches for inverting $f$. For example, the bisection method is guaranteed to converge globally, but its computational cost is $O(D)$, and is usually much more expensive than \algoref{alg:solver}. Note that as discussed earlier, autoregressive flow models can also be viewed as special cases of our framework. Therefore, \algoref{alg:solver} is also applicable to inverting autoregressive flow models and could potentially result in large improvements of sampling speed.

\section{Masked Invertible Networks}\label{sec:invnet}
We show that techniques developed in \secref{sec:composition} can be used to build our Masked Invertible Network (MintNet). First, we discuss how we compose several masked convolutions to form the Masked Invertible Layer (Mint layer). Next, we stack multiple Mint layers to form a deep neural network, \ie, the MintNet. Finally, we compare MintNets with several existing invertible architectures.

\subsection{Building the Masked Invertible Layer}\label{sec:layer}
We construct an invertible module in $\mcal{M}$ that serves as the basic layer of our MintNet. This invertible module, named Mint layer, is defined as
\begin{align}
    \frakm(\bfx) = \bft \odot \bfx + \sum_{i=1}^K \bfW^3_i h\bigg(\sum_{j=1}^K \bfW^2_{ij} h(\bfW^1_j \bfx + \mbf{b}^1_j) + \mbf{b}^2_{ij}\bigg) + \mbf{b}^3_i, \label{eqn:block}
\end{align}
where $\odot$ denotes the elementwise multiplication, $\{\bfW^1_i\}|_{i=1}^{K}$, $\{\bfW^2_{ij}\}|_{1 \leq i,j\leq K}$, and $\{\bfW^3_i\}|_{i=1}^K$ are all lower triangular matrices with additional constraints to be specified later, and $\bft > \bfzero$. Additionally, Mint layers use a monotonic activation function $h$, so that $h' \geq 0$. Common choices of $h$ include ELU~\cite{clevert2015fast}, tanh and sigmoid. Note that every individual weight matrix has the same size, and the 3 groups of weights $\{\bfW^1_i\}|_{i=1}^{K}$, $\{\bfW^2_{ij}\}|_{1 \leq i,j\leq K}$ and $\{\bfW^3_i\}|_{i=1}^K$ can be implemented with 3 masked convolutions (see Appendix~\ref{app:masked_conv}). We design the form of $\frakm(\bfx)$ so that it resembles a ResNet / i-ResNet block that also has 3 convolutions with $K \times C$ filters, with $C$ being the number of channels of $\bfx$. When using \algoref{alg:solver} to invert Mint layers, we initialize $\bfx_0 = \bfz \odot \frac{1}{\bft}$.

From \propref{prop:calculus} in \secref{sec:calculus}, we can easily conclude that $\frakm \in \mcal{F}$. Now, we consider additional constraints on the weights so that $\frakm \in \mcal{M}$, \ie, it is invertible. Note that the analytic form of its Jacobian is
\begin{align}
    J_\frakm(\bfx) = \sum_{i=1}^K \bfW_i^3 \bfA_i \sum_{j=1}^K \bfW_{ij}^2 \bfB_j \bfW_j^1 + \bft, \label{eqn:jacobian}
\end{align}
with $\bfA_i = \operatorname{diag}(h'\big(\sum_{j=1}^K \bfW^2_{ij} h(\bfW^1_j \bfx + \mbf{b}^1_j) + \mbf{b}^2_{ij}\big)) \geq \bfzero$, $\bfB_j = \operatorname{diag}(h'(\bfW^1_j \bfx + \mbf{b}^1_j)) \geq \bfzero$, and $\bft > \bfzero$. Therefore, once we impose the following constraint
\begin{align}
    \operatorname{diag}(\bfW_i^3)\operatorname{diag}(\bfW_{ij}^2)\operatorname{diag}(\bfW_j^1) \geq \bfzero,  \forall 1 \leq i,j\leq K, \label{eqn:constraint}
\end{align}
we have $\operatorname{diag}(J_{\frakm}(\bfx)) > \bfzero$, which satisfies the condition of \thmref{thm:invert} and as a consequence we know $\frakm \in \mcal{M}$. %
In practice, the constraint \eqnref{eqn:constraint} can be easily implemented. For all $1 \leq i, j \leq K$, we impose no constraint on $\bfW_i^3$ and $\bfW_j^1$, but replace $\bfW_{ij}^2$ with $\bfV_{ij}^2 =\bfW_{ij}^2\operatorname{sign}(\operatorname{diag}(\bfW_{ij}^2))\operatorname{sign}(\operatorname{diag}(\bfW_i^3 \bfW_j^1))$. 
Note that $\operatorname{diag}(\bfV_{ij}^2)$ has the same signs as $\operatorname{diag}(\bfW_i^3)\operatorname{diag}(\bfW_j^1)$ and therefore $\operatorname{diag}(\bfW_i^3)\operatorname{diag}(\bfV_{ij}^2)\operatorname{diag}(\bfW_j^1) \geq \bfzero$. Moreover, $\bfV_{ij}^2$ is almost everywhere differentiable \wrt $\bfW_{ij}^2$, which allows gradients to backprop through.

\subsection{Constructing the Masked Invertible Network}
In this section, we introduce design choices that help stack multiple Mint layers together to form an expressive invertible neural network, namely the MintNet. The full MintNet is constructed by stacking the following paired Mint layers and squeezing layers.
\paragraph{Paired Mint layers.} As discussed above, our Mint layer $\frakm(\bfx)$ always has a triangular Jacobian. To maximize the expressive power of our invertible neural network, it is undesirable to constrain the Jacobian of the network to be triangular since this limits capacity and will cause blind spots in the receptive field of masked convolutions. We thus always pair two Mint layers together---one with a lower triangular Jacobian and the other with an upper triangular Jacobian, so that the Jacobian of the paired layers is not triangular, and blind spots can be eliminated.

\paragraph{Squeezing layers.} Subsampling is important for enlarging the receptive field of convolutions. However, common subsampling operations such as pooling and strided convolutions are usually not invertible. Following \cite{nvp} and \cite{i-resnet}, we use a ``squeezing'' operation to reshape the feature maps  so that they have smaller resolution but more channels. After a squeezing operation, the height and width will decrease by a factor of $k$
, but the number of channels will increase by a factor of $k^2$.  This procedure is invertible and the Jacobian is an identity matrix. Throughout the paper, we use $k=2$.

\subsection{Comparison to other approaches}\label{sec:connection}
In what follows we compare MintNets to several existing methods for developing invertible architectures. We will focus on architectures with a tractable Jacobian determinant. However, we note that there are models (\cf, \cite{gomez2017reversible,mackay2018reversible,revnet}) that allow fast inverse computation but do not have tractable Jacobian determinants. Following~\cite{i-resnet}, we also provide some comparison in \tabref{tab:compare} (see Appendix~\ref{app:tables}).

\subsubsection{Models based on identities of determinants} Some identities can be used to speed up the computation of determinants if the Jacobians have special structures. For example, in Sylvester flow~\cite{berg2018sylvester}, the invertible transformation has the form $f(\bfx) \triangleq \bfx + \bfA h(\bfB\bfx + \bfb)$, where $h(\cdot)$ is a nonlinear activation function, $\bfA \in \mbb{R}^{D\times M}$, $\bfB \in \mbb{R}^{M \times D}$, $\bfb \in \mbb{R}^M$ and $M \leq D$. By Sylvester's determinant identity, $\operatorname{det}(J_f(\bfx))$ can be computed in $O(M^3)$, which is much less than $O(D^3)$ if $M \ll D$. However, the requirement that $M$ is small becomes a bottleneck of the architecture and limits its expressive power. Similarly, Planar flow~\cite{rezende15variational} uses the matrix determinant lemma, but has an even narrower bottleneck. 

The form of $\frakm(\bfx)$ bears some resemblance to Sylvester flow. However, we improve the capacity of Sylvester flow in two ways. First, we add one extra non-linear convolutional layer. Second, we avoid the bottleneck that limits the maximum dimension of latent representations in Sylvester flow.

\subsubsection{Models based on dimension partitioning} NICE~\cite{dinh2016density}, Real NVP~\cite{nvp}, and Glow~\cite{glow} all depend on an affine coupling layer. Given $d < D$, $\bfx$ is first partitioned into two parts $\bfx = [\bfx_{1:d}; \bfx_{d+1:D}]$. The coupling layer is an invertible transformation, defined as $f:\bfx \mapsto \bfz, \quad \bfz_{1:d} = \bfx_{1:d}, \quad \bfz_{d+1:D} = \bfx_{d+1:D} \odot \exp(s(\bfx_{1:d})) + t(\bfx_{1:d})$, where $s(\cdot)$ and $t(\cdot)$ are two arbitrary functions. %
However, the partitioning of $\bfx$ relies on heuristics, and the performance is sensitive to this choice (\cf, \cite{glow,i-resnet}). In addition, the Jacobian of $f$ is a triangular matrix with diagonal $[\mbf{1}_d; \exp(s(\bfx_{1:d}))]$. In contrast, the Jacobian of MintNets has more flexible diagonals---without being partially restricted to $1$'s.

\subsubsection{Models based on autoregressive transformations} By leveraging autoregressive transformations, the Jacobian can be made triangular. For example, MAF~\cite{maf} defines the invertible tranformation as $f: \bfx \mapsto \bfz, \quad z_i = \mu(\bfx_{1:i-1}) + \sigma(\bfx_{1:i-1}) x_i$, where $\mu(\cdot) \in \mbb{R}$ and $\sigma(\cdot) \in \mbb{R}^+$. Note that $f^{-1}(\bfz)$ can be obtained by sequentially solving $x_i$ based on previous solutions $\bfx_{1:i-1}$. Therefore, a na\"{i}ve approach requires $\Omega(D)$ computations for inverting autoregressive models. Moreover, the architecture of $f$ is only an affine combination of autoregressive functions with $\bfx$. In contrast, MintNets are inverted with faster fixed-point iteration methods, and the architecture of MintNets is arguably more flexible.

\subsubsection{Free-form invertible models} Some work proposes invertible transformations whose Jacobians are not limited by special structures. For example, FFJORD~\cite{FFJORD} uses a continuous version of change of variables formula~\cite{chen2018neural} where the determinant is replaced by trace. Unlike MintNets, FFJORD needs an ODE solver to compute its value and inverse, and uses a stochastic estimator to approximate the trace. Another work is i-ResNet~\cite{i-resnet} which constrains the Lipschitz-ness of ResNet layers to make it invertible. Both i-ResNet and MintNet use ResNet blocks with 3 convolutions. The inverse of i-ResNet can be obtained efficiently by a parallelizable fixed-point iteration method, which has comparable computational cost as our \algoref{alg:solver}. However, unlike MintNets whose Jacobian determinants are exact, the log-determinant of Jacobian of an i-ResNet must be approximated by truncating a power series and estimating each term with stochastic estimators. 

\subsubsection{Other models using masked convolutions}
Emerging convolutions~\cite{hoogeboom2019emerging} and MaCow~\cite{ma2019macow} improve the Glow architecture by replacing $1 \times 1$ convolutions in the original Glow model with masked convolutions similar to those employed in MintNets. Emerging convolutions and MaCow are both inverted using forward/back substitutions designed for inverting triangular matrices, which requires the same number of iterations as the input dimension. In stark contrast, MintNets use a fixed-point iteration method (\algoref{alg:solver}) for inversion, which is similar to i-ResNet and requires substantially fewer iterations than the input dimension. For example, our method of inversion takes 120 iterations to converge on CIFAR-10, while inverting emerging convolutions will need 3072 iterations. In other words, our inversion can be 25 times faster on powerful GPUs. Additionally, the architecture of MintNet is very different. The architectures of \cite{hoogeboom2019emerging} and \cite{ma2019macow} are both built upon Glow. In contrast, MintNet is a ResNet architecture where normal convolutions are replaced by causal convolutions.

\section{Experiments}\label{sec:exp}
In this section, we evaluate our MintNet architectures on both image classification and density estimation. We focus on three common image datasets, namely MNIST, CIFAR-10 and ImageNet 32$\times$32. We also empirically verify that \algoref{alg:solver} can provide accurate solutions within a small number of iterations. We provide more details about settings and model architectures in Appendix~\ref{app:network}.

\subsection{Classification}
To check the capacity of MintNet and understand the trade-off of invertibility, we test its classification performance on MNIST and CIFAR-10, and compare it to a ResNet with a similar architecture. %

On MNIST, MintNet achieves a test accuracy of 99.6\%, which is the same as that of the ResNet. On CIFAR-10, MintNet reaches 91.2\% test accuracy while ResNet reaches 92.6\%. 
Both MintNet and ResNet achieve 100\% training accuracy on MNIST and CIFAR-10 datasets. This indicates that MintNet has enough capacity to fit all data labels on the training dataset, and the invertible representations learned by MintNet are comparable to representations learned by non-invertible networks in terms of generalizability. Note that the small degradation in classification accuracy is also observed in other invertible networks. For example, depending on the Lipschitz constant, the gap between test accuracies of i-ResNet and ResNet can be as large as 1.92\% on CIFAR-10.  %

\subsection{Density estimation and verification of invertibility}\label{sec:exp:invert}

In this section, we demonstrate the superior performance of MintNet on density estimation by training it as a flow generative model. In addition, we empirically verify that \algoref{alg:solver} can accurately produce the inverse using a small number of iterations. We show that samples can be efficiently generated from MintNet by inverting each Mint layer with \algoref{alg:solver}.
\paragraph{Density estimation.}
In \tabref{tab:bpd_table}, we report bits per dimension (bpd) on MNIST, CIFAR-10, and ImageNet 32$\times$32 datasets. It is notable that MintNet sets the new records of bpd on all three datasets. Moreover, when compared to previous best models, our MNIST model uses 30\% fewer parameters than FFJORD, and our CIFAR-10 and ImageNet 32$\times$32 models respectively use 60\% and 74\% fewer parameters than Glow. When trained on datasets such as CIFAR-10, MintNet requires 2 GPUs for approximately five days, while FFJORD is trained on 6 GPUs for five days, and Glow on 8 GPUs for seven days. 
Note that all values in \tabref{tab:bpd_table} are with respect to the continuous distribution of uniformly dequantized images, and results of models that view images as discrete distributions are not directly comparable (\eg, PixelCNN~\cite{oord2016pixel}, IAF-VAE~\cite{kingma2016iaf}, and Flow++~\cite{ho2019flow}). To show that MintNet learns semantically meaningful representations of images, we also perform latent space interpolation similar to the interpolation experiments in Real NVP (see Appendix~\ref{app:interpolation}).

\begin{table}
 \caption{MNIST, CIFAR-10, ImageNet 32$\times$32 bits per dimension (bpd) results. Smaller values are better. $^\dagger$Result not directly comparable because ZCA preprocssing was used.} \label{tab:bpd_table}
\begin{center}
    \begin{tabular}{p{5cm} c c c}
        \toprule
        Method & MNIST & CIFAR-10 & ImageNet 32$\times$32\\
        \midrule
        NICE~\cite{dinh2016density} &4.36 &4.48$^\dagger$ & -\\
        MAF~\cite{maf} &1.89 &4.31 & -\\
        Real NVP~\cite{nvp} &1.06 &3.49 & 4.28\\
        Glow~\cite{glow} &1.05 &3.35 & 4.09\\
        FFJORD~\cite{FFJORD} &0.99 &3.40 & -\\
        i-ResNet~\cite{i-resnet} &1.06 &3.45 & - \\
        \midrule
        MintNet (ours) &\textbf{0.98} &\textbf{3.32} & \textbf{4.06}\\
        \bottomrule
    \end{tabular} 
\end{center}
\end{table}

\paragraph{Verification of invertibility.} We first examine the performance of \algoref{alg:solver} by measuring the reconstruction error of MintNets. We compute the inverse of MintNet by sequentially inverting each Mint layer with \algoref{alg:solver}. We used grid search to select the step size $\alpha$ in \algoref{alg:solver} and chose $\alpha=3.5, 1.1, 1.15$ respectively for MNIST, CIFAR-10 and ImageNet 32$\times$32. An interesting fact is for MNIST, $\alpha=3.5$ actually works better than other values of $\alpha$ within $(0,2)$, even though it does not have the theoretical gurantee of local convergence. As \figref{fig:newton} shows, the normalized $L_2$ reconstruction error converges within $120$ iterations for all datasets considered. Additionally, \figref{fig:recon} demonstrates that the reconstructed images look visually indistinguishable to true images. 

\paragraph{Samples.} Using \algoref{alg:solver}, we can generate samples efficiently by computing the inverse of MintNets. We use the same step sizes as in the reconstruction error analysis, and run \algoref{alg:solver} for 120 iterations for all three datasets. We provide uncurated samples in \figref{fig:samples}, and more samples can be found in Appendix~\ref{app:samples}. In addition, we compare our sampling time to that of the other models (see \tabref{tab:sample_time} in Appendix~\ref{app:tables}). Our sampling method has comparable speed as i-ResNet. It is approximately 5 times faster than autoregressive sampling on MNIST, and is roughly 25 times faster on CIFAR-10 and ImageNet 32$\times$32.
\begin{figure}%
    \centering
    \begin{subfigure}[b]{0.3\textwidth}
        \includegraphics[width=\textwidth]{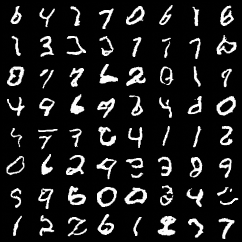}
        \caption{MNIST}
        \label{fig:mnist}
    \end{subfigure}
    \begin{subfigure}[b]{0.3\textwidth}
        \includegraphics[width=\textwidth]{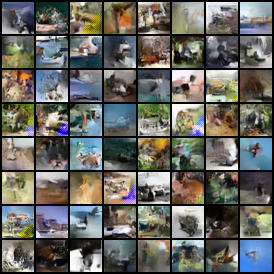}
        \caption{CIFAR-10}
        \label{fig:cifar10}
    \end{subfigure}
    \begin{subfigure}[b]{0.3\textwidth}
        \includegraphics[width=\textwidth]{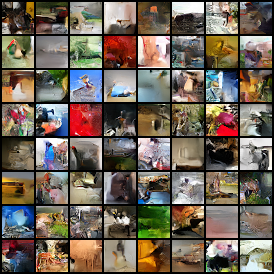}
        \caption{ImageNet-32$\times$32}
        \label{fig:imagenet32}
    \end{subfigure}
    \caption{Uncurated samples on MNIST, CIFAR-10, and ImageNet 32$\times$32 datasets.}
    \label{fig:samples}
\end{figure}
\begin{figure}%
    \centering
    \begin{subfigure}[b]{0.5\textwidth}
        \includegraphics[width=\textwidth]{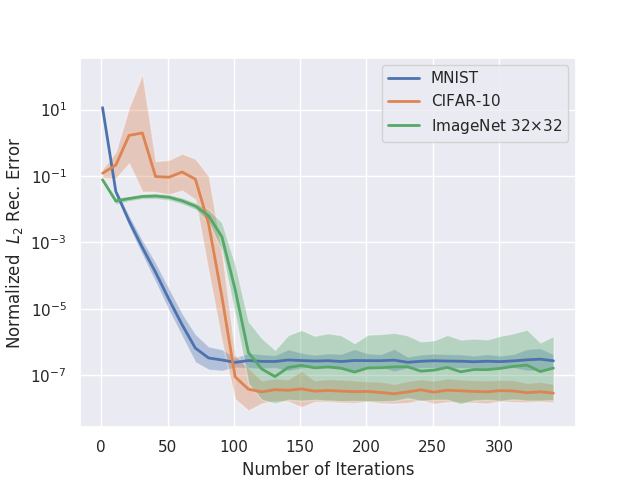}
        \caption{Reconstruction error analysis.}
        \label{fig:newton}
    \end{subfigure}
    \begin{subfigure}[b]{0.45\textwidth}
        \includegraphics[width=\textwidth]{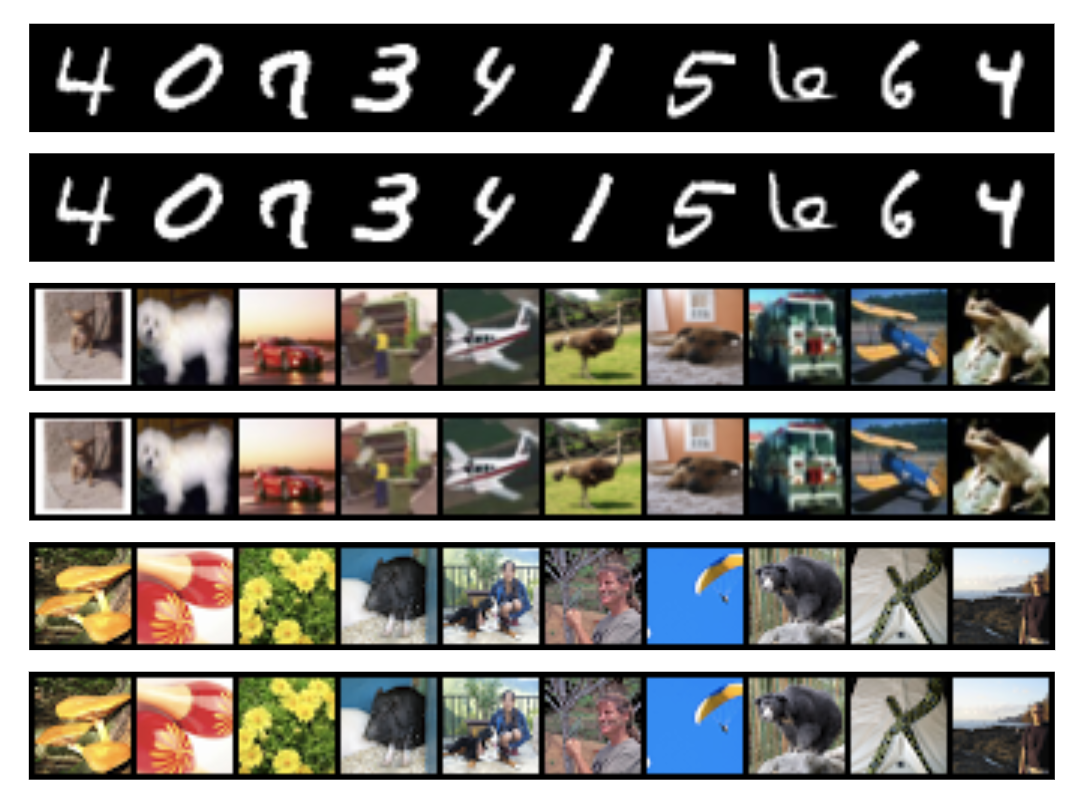}
        \caption{Reconstructed images.}
        \label{fig:recon}
    \end{subfigure}
    \caption{Accuracy analysis of \algoref{alg:solver} on MNIST, CIFAR-10, and ImageNet 32$\times$32 datasets. Each curve in (a) represents the mean value of normalized reconstruction errors for 128 images. The 2nd, 4th and 6th rows in (b) are reconstructions, while other rows are original images.}
    \label{fig:invert}
\end{figure}

\section{Conclusion}
We propose a new method to compositionally construct invertible modules that are flexible, efficient to invert, and with a tractable Jacobian. Starting from linear transformations with triangular matrices, we apply a set of composition rules to recursively build new modules that are non-linear and more expressive~(\propref{prop:calculus}). We then show that the composed modules are invertible as long as their Jacobians are non-singular (\thmref{thm:invert}), and propose an efficiently parallelizable numerical method (\algoref{alg:solver}) with theoretical guarantees (\thmref{thm:converge}) to compute the inverse. The Jacobians of our modules are all triangular, which allows efficient and exact determinant computation.

As an application of this idea, we use masked convolutions as our basic module. Using our composition rules, we compose multiple masked convolutions together to form a module named Mint layer, following the architecture of a ResNet block. To enforce its invertibility, we constrain the masked convolutions to satisfy the condition of \thmref{thm:invert}. We show that multiple Mint layers can be stacked together to form a deep invertible network which we call MintNet. The architecture can be efficiently inverted using a fixed point iteration algorithm (\algoref{alg:solver}). Experimentally, we show that MintNet performs well on MNIST and CIFAR-10 classification. Moreover, when trained as a generative model, MintNet achieves new state-of-the-art performance on MNIST, CIFAR-10 and ImageNet 32$\times$32.

\subsection*{Acknowledgements}
This research was supported by Intel Corporation, Amazon AWS, TRI, NSF (\#1651565, \#1522054, \#1733686), ONR  (N00014-19-1-2145), AFOSR (FA9550-
19-1-0024).

\bibliography{flow}

\begin{thebibliography}{10}

\bibitem{i-resnet}
J.~Behrmann, D.~D. Will~Grathwohl, Ricky T. Q.~Chen, and J.-H. Jacobsen.
\newblock Invertible residual networks.
\newblock {\em arXiv preprint arXiv:1811.00995}, 2019.

\bibitem{berg2018sylvester}
R.~v.~d. Berg, L.~Hasenclever, J.~M. Tomczak, and M.~Welling.
\newblock Sylvester normalizing flows for variational inference.
\newblock {\em arXiv preprint arXiv:1803.05649}, 2018.

\bibitem{chen2018neural}
T.~Q. Chen, Y.~Rubanova, J.~Bettencourt, and D.~K. Duvenaud.
\newblock Neural ordinary differential equations.
\newblock In S.~Bengio, H.~Wallach, H.~Larochelle, K.~Grauman, N.~Cesa-Bianchi,
  and R.~Garnett, editors, {\em Advances in Neural Information Processing
  Systems 31}, pages 6571--6583. Curran Associates, Inc., 2018.

\bibitem{clevert2015fast}
D.-A. Clevert, T.~Unterthiner, and S.~Hochreiter.
\newblock Fast and accurate deep network learning by exponential linear units
  (elus).
\newblock {\em arXiv preprint arXiv:1511.07289}, 2015.

\bibitem{dinh2016density}
L.~Dinh, D.~Krueger, and Y.~Bengio.
\newblock {NICE:} non-linear independent components estimation.
\newblock In {\em 3rd International Conference on Learning Representations,
  {ICLR} 2015, San Diego, CA, USA, May 7-9, 2015, Workshop Track Proceedings},
  2015.

\bibitem{nvp}
L.~Dinh, J.~Sohl-Dickstein, and S.~Bengio.
\newblock Density estimation using real nvp.
\newblock {\em arXiv preprint arXiv:1605.08803}, 2016.

\bibitem{gomez2017reversible}
A.~N. Gomez, M.~Ren, R.~Urtasun, and R.~B. Grosse.
\newblock The reversible residual network: Backpropagation without storing
  activations.
\newblock In I.~Guyon, U.~V. Luxburg, S.~Bengio, H.~Wallach, R.~Fergus,
  S.~Vishwanathan, and R.~Garnett, editors, {\em Advances in Neural Information
  Processing Systems 30}, pages 2214--2224. Curran Associates, Inc., 2017.

\bibitem{revnet}
A.~N. Gomez, M.~Ren, R.~Urtasun, and R.~B. Grosse.
\newblock The reversible residual network: Backpropagation without storing
  activations.
\newblock In I.~Guyon, U.~V. Luxburg, S.~Bengio, H.~Wallach, R.~Fergus,
  S.~Vishwanathan, and R.~Garnett, editors, {\em Advances in Neural Information
  Processing Systems 30}, pages 2214--2224. Curran Associates, Inc., 2017.

\bibitem{FFJORD}
W.~Grathwohl, I.~S. Ricky T. Q.~Chen, Jesse~Bettencourt, and D.~Duvenaud.
\newblock Ffjord: Free-form continuous dynamics for scalable reversible
  generative models.
\newblock {\em arXiv preprint arXiv:1810.01367}, 2018.

\bibitem{he2016deep}
K.~He, X.~Zhang, S.~Ren, and J.~Sun.
\newblock Deep residual learning for image recognition.
\newblock In {\em Proceedings of the IEEE conference on computer vision and
  pattern recognition}, pages 770--778, 2016.

\bibitem{he2016identity}
K.~He, X.~Zhang, S.~Ren, and J.~Sun.
\newblock Identity mappings in deep residual networks.
\newblock In {\em European conference on computer vision}, pages 630--645.
  Springer, 2016.

\bibitem{ho2019flow}
J.~Ho, X.~Chen, A.~Srinivas, Y.~Duan, and P.~Abbeel.
\newblock Flow++: Improving flow-based generative models with variational
  dequantization and architecture design, 2019.

\bibitem{hoogeboom2019emerging}
E.~Hoogeboom, R.~Van Den~Berg, and M.~Welling.
\newblock Emerging convolutions for generative normalizing flows.
\newblock In {\em International Conference on Machine Learning}, pages
  2771--2780, 2019.

\bibitem{jacobsen2018excessive}
J.-H. Jacobsen, J.~Behrmann, R.~Zemel, and M.~Bethge.
\newblock Excessive invariance causes adversarial vulnerability.
\newblock In {\em International Conference on Learning Representations}, 2019.

\bibitem{jacobsen2018irevnet}
J.-H. Jacobsen, A.~W. Smeulders, and E.~Oyallon.
\newblock i-revnet: Deep invertible networks.
\newblock In {\em International Conference on Learning Representations}, 2018.

\bibitem{glow}
D.~P. Kingma and P.~Dhariwal.
\newblock Glow: Generative flow with invertible 1x1 convolutions.
\newblock {\em arXiv preprint arXiv:1807.03039}, 2018.

\bibitem{kingma2016iaf}
D.~P. Kingma, T.~Salimans, R.~Jozefowicz, X.~Chen, I.~Sutskever, and
  M.~Welling.
\newblock Improved variational inference with inverse autoregressive flow.
\newblock In D.~D. Lee, M.~Sugiyama, U.~V. Luxburg, I.~Guyon, and R.~Garnett,
  editors, {\em Advances in Neural Information Processing Systems 29}, pages
  4743--4751. Curran Associates, Inc., 2016.

\bibitem{levy2018generalizing}
D.~Levy, M.~D. Hoffman, and J.~Sohl-Dickstein.
\newblock Generalizing hamiltonian monte carlo with neural networks.
\newblock In {\em International Conference on Learning Representations}, 2018.

\bibitem{loshchilov2016sgdr}
I.~Loshchilov and F.~Hutter.
\newblock Sgdr: Stochastic gradient descent with warm restarts.
\newblock {\em arXiv preprint arXiv:1608.03983}, 2016.

\bibitem{ma2019macow}
X.~Ma and E.~Hovy.
\newblock Macow: Masked convolutional generative flow.
\newblock {\em arXiv preprint arXiv:1902.04208}, 2019.

\bibitem{mackay2018reversible}
M.~MacKay, P.~Vicol, J.~Ba, and R.~B. Grosse.
\newblock Reversible recurrent neural networks.
\newblock In S.~Bengio, H.~Wallach, H.~Larochelle, K.~Grauman, N.~Cesa-Bianchi,
  and R.~Garnett, editors, {\em Advances in Neural Information Processing
  Systems 31}, pages 9029--9040. Curran Associates, Inc., 2018.

\bibitem{oord2016pixel}
A.~V. Oord, N.~Kalchbrenner, and K.~Kavukcuoglu.
\newblock Pixel recurrent neural networks.
\newblock In M.~F. Balcan and K.~Q. Weinberger, editors, {\em Proceedings of
  The 33rd International Conference on Machine Learning}, volume~48 of {\em
  Proceedings of Machine Learning Research}, pages 1747--1756, New York, New
  York, USA, 20--22 Jun 2016. PMLR.

\bibitem{ortega1970iterative}
J.~M. Ortega and W.~C. Rheinboldt.
\newblock {\em Iterative solution of nonlinear equations in several variables},
  volume~30.
\newblock Siam, 1970.

\bibitem{maf}
G.~Papamakarios, T.~Pavlakou, and I.~Murray.
\newblock Masked autoregressive flow for density estimation.
\newblock In {\em Advances in Neural Information Processing Systems}, pages
  2338--2347, 2017.

\bibitem{amsgrad}
T.~T. Phuong and L.~T. Phong.
\newblock On the convergence proof of amsgrad and a new version.
\newblock {\em arXiv preprint arXiv:1904.03590}, 2019.

\bibitem{reddi2018on}
S.~J. Reddi, S.~Kale, and S.~Kumar.
\newblock On the convergence of adam and beyond.
\newblock In {\em International Conference on Learning Representations}, 2018.

\bibitem{rezende15variational}
D.~Rezende and S.~Mohamed.
\newblock Variational inference with normalizing flows.
\newblock In F.~Bach and D.~Blei, editors, {\em Proceedings of the 32nd
  International Conference on Machine Learning}, volume~37 of {\em Proceedings
  of Machine Learning Research}, pages 1530--1538, Lille, France, 07--09 Jul
  2015. PMLR.

\bibitem{song2017nice}
J.~Song, S.~Zhao, and S.~Ermon.
\newblock A-nice-mc: Adversarial training for mcmc.
\newblock In {\em Advances in Neural Information Processing Systems}, pages
  5140--5150, 2017.

\end{thebibliography}
\newpage
\appendix
\section{Proofs}\label{app:proof}
\paragraph{Notations.} Let $J_f(\bfx)$ denote the Jacobian of $f$ evaluated at $\bfx$. We use $[f(\bfx)]_i$ to denote the $i$-th component of the vector-valued function $f$, and $[J_f(\bfx)]_{ij}$ to denote the $ij$-th entry of $J_f(\bfx)$. We further use $\bfx_i$ to denote the $i$-th component of the input vector $\bfx \in \mbb{R}^D$, and $\frac{\partial [f(\bfx)]_i}{\partial \bfx_j}\big|_{\bfx = \bft}$ to denote the partial derivative of $[f(\bfx)]_i$ \wrt $\bfx_j$, evaluated at $\bfx = \bft$.

\begin{customprop}{\ref{prop:calculus}}
Define $\mcal{F}$ as the set of all continuously differentiable functions whose Jacobian is lower triangular. Then $\mcal{F}$ contains the basic module in \secref{sec:basic}, and is closed under the following composition rules.
\begin{itemize}
   \item \textbf{Rule of addition}. $f_1 \in \mcal{F} \wedge f_2\in\mcal{F} \Rightarrow \lambda f_1 + \mu f_2 \in \mcal{F}$, where $\lambda, \mu \in \mbb{R}$. %
    \item \textbf{Rule of composition}. $f_1 \in \mcal{F} \wedge f_2 \in \mcal{F} \Rightarrow f_2 \circ f_1 \in \mcal{F}$. A special case is $f \in \mcal{F} \Rightarrow h \circ f \in \mcal{F}$, where $h(\cdot)$ is a continuously differentiable non-linear activation function that is applied element-wisely.
\end{itemize}
\end{customprop}
\begin{proof}
    Since the basic modules have the form $f(\bfx) = \mbf{W}\bfx + \bfb$, where $\mbf{W}$ is a lower triangular matrix, we immediately know that $f$ is continuously differentiable and $J_f$ is lower triangular, therefore $f \in \mcal{F}$. Next, we prove the closeness properties of $\mcal{F}$ one by one.
    \begin{itemize}
        \item \textbf{Rule of addition}. $f = \lambda f_1 + \mu f_2$ is continuously differentiable, and $J_f$ is lower triangular. This is because $\nicefrac{\partial f}{\partial \bfx} = \nicefrac{\partial (\lambda f_1 + \mu f_2)}{\partial \bfx} = \lambda\nicefrac{\partial f_1}{\partial \bfx} + \mu \nicefrac{\partial f_2}{\partial \bfx}$, and both $\nicefrac{\partial f_1}{\partial \bfx}$ and $\nicefrac{\partial f_2}{\partial \bfx}$ are continuous and lower triangular.
        
        \item \textbf{Rule of composition}. $f = f_2 \circ f_1$ is continuously differentiable and has a lower triangular Jacobian. This is because $\nicefrac{\partial f}{\partial \bfx} =\nicefrac{\partial (f_2 \circ f_1)}{\partial \bfx} = \nicefrac{\partial f_2}{\partial \bfx}\big|_{\bfx = f_1(\bfx)} \nicefrac{\partial f_1}{\partial \bfx}$, and both $\nicefrac{\partial f_2}{\partial \bfx}$ and $\nicefrac{\partial f_1}{\partial \bfx}$ are continuous and lower triangular. As a special case, we choose $f_1 = h$, where $h$ is a continuously differentiable univariate function. Since the Jacobian of $h$ is diagonal and continuous, we have $h \in \mcal{F}$. Therefore $h \circ f_2 \in \mcal{F}$ holds true for all $f_2 \in \mcal{F}$.
    \end{itemize}
\end{proof}

The following two lemmas will be very helpful for proving \thmref{thm:invert}.
\begin{lemma}
\label{condition1}
$J_f(\bfx)$ is lower triangular for all $\bfx \in \mbb{R}^D$ implies $[f(\bfx)]_i$ is a function of $\bfx_1,...,\bfx_i$, and does not depend on $\bfx_{i+1}, \cdots, \bfx_D$.
\end{lemma}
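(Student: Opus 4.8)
The plan is to restate the hypothesis analytically and then reduce the claim to the elementary fact that a $C^1$ function whose partial derivatives with respect to a block of variables vanish identically on a connected domain cannot depend on those variables. Saying that $J_f(\bfx)$ is lower triangular for every $\bfx$ means precisely that $[J_f(\bfx)]_{ij} = \frac{\partial [f(\bfx)]_i}{\partial \bfx_j} = 0$ whenever $j > i$. Thus, after fixing an index $i$, the scalar function $g \triangleq [f(\cdot)]_i$ satisfies $\frac{\partial g}{\partial \bfx_j} \equiv 0$ for every $j \in \{i+1, \ldots, D\}$, and the goal is to conclude that $g$ is independent of the coordinates $\bfx_{i+1}, \ldots, \bfx_D$.

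To establish this, I would fix two points $\bfx, \bfx' \in \mbb{R}^D$ that agree in their first $i$ coordinates (so $\bfx_k = \bfx'_k$ for $k \le i$) and join them by the straight-line path $\gamma(s) = (1-s)\bfx + s\bfx'$ for $s \in [0,1]$; convexity of $\mbb{R}^D$ guarantees the segment stays in the domain. Differentiating $g(\gamma(s))$ by the chain rule gives $\frac{d}{ds} g(\gamma(s)) = \sum_{j=1}^D \frac{\partial g}{\partial \bfx_j}(\gamma(s))\, (\bfx'_j - \bfx_j)$. The terms with $j \le i$ drop out because $\bfx'_j - \bfx_j = 0$ there, and the terms with $j > i$ drop out because the corresponding partial derivatives vanish by the triangularity hypothesis; hence $\frac{d}{ds} g(\gamma(s)) = 0$ for all $s$. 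Since $g \circ \gamma$ is continuously differentiable with identically zero derivative on the connected interval $[0,1]$, it is constant, so $g(\bfx) = g(\bfx')$. Because $\bfx'$ was an arbitrary point sharing the first $i$ coordinates with $\bfx$, this shows that $[f(\bfx)]_i$ depends only on $\bfx_1, \ldots, \bfx_i$.

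The argument is genuinely routine, so I do not anticipate a serious obstacle; the only step that deserves care is the passage from \emph{the partial derivatives vanish} to \emph{the function is independent of those variables}. This step is not purely algebraic---it relies on an integration (equivalently, mean-value) argument and hence on the domain being connected, which is automatic here since the domain is the convex set $\mbb{R}^D$. I would make this explicit, both because the conclusion genuinely fails on a disconnected domain and because this is exactly where the $C^1$ regularity built into the definition of $\mcal{F}$ is used.
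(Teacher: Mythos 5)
Your proof is correct and takes essentially the same route as the paper's: both reduce the claim to the fact that the vanishing of $\frac{\partial [f(\bfx)]_i}{\partial \bfx_j}$ for $j>i$ forces independence of the trailing coordinates, established by integrating the zero derivative over the (connected) domain. The only cosmetic difference is that the paper applies the fundamental theorem of calculus along one coordinate axis $\bfx_j$ at a time, whereas you differentiate along a single straight segment joining two points agreeing in the first $i$ coordinates, which handles all trailing coordinates in one stroke.
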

\begin{proof}
Due to the fact that $J_f(\bfx)$ is lower triangular, we have $[J_{f}(\bfx)]_{i,j}=\frac{\partial [f(\bfx)]_{i}}{\partial \bfx_{j}}=0$ for any $j>i$. When $\bfx_1,...,\bfx_{j-1},\bfx_{j+1},...,\bfx_D$ are fixed, we have
\begin{align}
 [f(\bfx_1,...,\bfx_{j-1},\bfx_j,\bfx_{j+1},\bfx_D)]_i &=[f(\bfx_1,...,\bfx_{j-1},0,\bfx_{j+1},...,\bfx_D)]_i +\int_{0}^{\bfx_j}\frac{\partial [f(\bft)]_{i}}{\partial \bft_{j}} \ud \bft_j\\
&=[f(\bfx_1,...,\bfx_{j-1},0,\bfx_{j+1},...,\bfx_D)]_i.
\end{align}

This implies that $[f(\bfx)]_i$ does not depend on $\bfx_{j}$ for any $j>i$. In other words, $f(\bfx)$ is only a function of $\bfx_1,...,\bfx_i$.
\end{proof}
\begin{lemma}
\label{condition2}
$\operatorname{diag}(J_f(\bfx) J_f(\bfzero)) > \bfzero$ implies that for any $1\leq i\leq n$, either (i) $\forall \bfx \in \mbb{R}^D: [J_f(\bfx)]_{ii}>0$ or (ii) $\forall \bfx \in \mbb{R}^D: [J_f(\bfx)]_{ii}<0$. That is, $[f(\bfx)]_i$ is monotonic \wrt $\bfx_i$ when $\bfx_1,\cdots,\bfx_{i-1}$ are fixed.
\end{lemma}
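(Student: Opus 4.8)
The plan is to use the lower-triangular structure of $J_f$ (available because we work with $f \in \mcal{F}$, so $J_f$ is triangular and continuous everywhere) to collapse the matrix condition $\operatorname{diag}(J_f(\bfx) J_f(\bfzero)) > \bfzero$ into a scalar sign condition on each diagonal entry, and then read off the dichotomy directly. First I would compute the diagonal of the product: since a product of lower-triangular matrices is again lower triangular, and since $[J_f(\bfx)]_{ik} = 0$ for $k > i$ while $[J_f(\bfzero)]_{ki} = 0$ for $k < i$, the only surviving term in $[J_f(\bfx) J_f(\bfzero)]_{ii} = \sum_k [J_f(\bfx)]_{ik}[J_f(\bfzero)]_{ki}$ is $k = i$. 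Hence $[J_f(\bfx) J_f(\bfzero)]_{ii} = [J_f(\bfx)]_{ii}\,[J_f(\bfzero)]_{ii}$, so the hypothesis is exactly that $[J_f(\bfx)]_{ii}\,[J_f(\bfzero)]_{ii} > 0$ for every $i$ and every $\bfx$.

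Next I would extract alternatives (i) and (ii). Evaluating at $\bfx = \bfzero$ gives $[J_f(\bfzero)]_{ii}^2 > 0$, so $[J_f(\bfzero)]_{ii} \neq 0$. Fixing $i$ and splitting on the sign of this nonzero number: if $[J_f(\bfzero)]_{ii} > 0$ then the positivity of the product forces $[J_f(\bfx)]_{ii} > 0$ for all $\bfx$, which is case (i); symmetrically $[J_f(\bfzero)]_{ii} < 0$ forces $[J_f(\bfx)]_{ii} < 0$ for all $\bfx$, which is case (ii). For the monotonicity statement, I would identify $[J_f(\bfx)]_{ii} = \partial [f(\bfx)]_i / \partial \bfx_i$ and invoke \lemref{condition1}, which guarantees $[f(\bfx)]_i$ depends only on $\bfx_1, \dots, \bfx_i$. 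With $\bfx_1, \dots, \bfx_{i-1}$ held fixed, $\bfx_i \mapsto [f(\bfx)]_i$ is then a differentiable single-variable function whose derivative has a constant sign, hence it is strictly monotonic.

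The argument is essentially bookkeeping, and the one step deserving care is the very first one: the triangularity must be used correctly so that the off-diagonal cross terms in the matrix product vanish and the $i$-th diagonal entry collapses to the single product $[J_f(\bfx)]_{ii}[J_f(\bfzero)]_{ii}$. Once this scalarization is in place the sign dichotomy is immediate, and the monotonicity follows purely from \lemref{condition1} together with the standard fact that a constant-sign derivative yields strict monotonicity. I would make explicit that the standing assumption $f \in \mcal{F}$ is what simultaneously licenses the product-of-diagonals computation and the identification of the diagonal entry with the partial derivative $\partial[f(\bfx)]_i/\partial\bfx_i$.
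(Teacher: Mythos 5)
Your proposal is correct and follows essentially the same route as the paper's proof: both scalarize the hypothesis to $[J_f(\bfx)]_{ii}[J_f(\bfzero)]_{ii}>0$ using triangularity, deduce that each diagonal entry shares the fixed nonzero sign of $[J_f(\bfzero)]_{ii}$, and conclude monotonicity of $[f(\bfx)]_i$ in $\bfx_i$ from the constant-sign partial derivative. Your only additions are making explicit the product-of-diagonals computation (which the paper treats as immediate) and explicitly invoking \lemref{condition1} for the reduction to a single-variable function, both of which are harmless refinements rather than a different argument.
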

\begin{proof}
Clearly $\operatorname{diag}(J_f(\bfx) J_f(\bfzero)) > \bfzero$ is equivalent to $\forall 1\leq i \leq n, \bfx \in \mbb{R}^D: [J_f(\bfx)]_{ii}[J_f(\bfzero)]_{ii} = \frac{\partial [f(\bfx)]_i}{\partial \bfx_i}\frac{\partial [f(\bfx)]_i}{\partial \bfx_i}\big|_{\bfx=\bfzero}>0$. This means for any $\bfx \in \mbb{R}^D$, $[J_f(\bfx)]_{ii} = \frac{\partial [f(\bfx)]_i}{\partial \bfx_i}\neq 0$ and it shares the same sign with $[J_f(\bfzero)]_{ii}=\frac{\partial [f(\bfx)]_i}{\partial \bfx_i}\big|_{\bfx=\bfzero}$, a constant that is either strictly positive or strictly negative. This further implies that when $\bfx_1, \cdots, \bfx_{i-1}$ are fixed, $\frac{\partial [f(\bfx)]_i}{\partial \bfx_i}$ is either strictly positive or strictly negative for all $\bfx_{i}\in \mathbb{R}$, and $[f(\bfx)]_i$ is therefore monotonic \wrt $\bfx_i$.
\end{proof}
\begin{customthm}{\ref{thm:invert}}
If $f\in\mcal{F}$ and $J_f(\bfx)$ is non-singular for all $\bfx$ in the domain, then $f$ is invertible.
\end{customthm}

\begin{proof}
Assume without loss of generality that $J_f(\bfx)$ is lower triangular. We first prove that $\operatorname{diag}(J_f(\bfx) J_f(\bfzero)) > \bfzero$ by contradiction. Assuming $\operatorname{diag}(J_f(\bfx) J_f(\bfzero)) \leq \bfzero$, then $\exists 1\leq i \leq n, \bfx' \in \mbb{R}^D$ such that $[J_f(\bfx')]_{ii} [J_f(\bfzero)]_{ii} \leq 0$. Because $J_f(\bfx)$ is always triangular and non-singular, we immediately conclude that $[J_f(\bfx')]_{ii} [J_f(\bfzero)]_{ii} < 0$. Assume without loss of generality that $[J_f(\bfzero)]_{ii} > 0$ and $[J_f(\bfx')]_{ii} < 0$. Then, by the intermediate value theorem, we know that $\exists t \in (0,1)$ such that $[J_f(t \bfx')]_{ii} = 0$, which contradicts that fact that $J_f(\bfx)$ is always non-singular.

Next, we prove that for all $\bfz$ in the range of $f(\bfx)$, there exists a unique $\bfx$ such that $f(\bfx) = \bfz$. To obtain $\bfx_1$, we only need to solve $[f(\bfx)]_1=\bfz_1$, which is an equation of variable $\bfx_1$, as concluded from \lemref{condition1}. Since \lemref{condition2} implies that $[f(\bfx)]_{1}$ is monotonic \wrt $\bfx_1$, we know that $[f(\bfx)]_{1}$ has a unique inverse $\bfx_1$ whenever $\bfz_1$ is in the range of $[f(\bfx)]_1$. Now assume we have already obtained $\bfx_1,...,\bfx_k$, where $k \geq 1$. In this case, \lemref{condition1} asserts that $[f(\bfx)]_{k+1} = \bfz_{k+1}$ is an equation of variable $\bfx_{k+1}$. Again \lemref{condition2} implies that $[f(\bfx)]_{k+1}$ is a monotonic function of $\bfx_{k+1}$ given $\bfx_1, \cdots, \bfx_k$, which implies further that $[f(\bfx)]_{k+1} = \bfz_{k+1}$ has a unique solution $\bfx_{k+1}$ whenever $\bfz_{k+1}$ is in the range of $[f(\bfx)]_{k+1}$. By induction, we can solve for $\bfx_1, \bfx_2, \cdots, \bfx_D$ by repetitively employing this procedure, which concludes that $f^{-1}(\bfz)=(\bfx_1,...,\bfx_{D})^\intercal$ exists, and can be determined uniquely. %

\end{proof}

\begin{customthm}{\ref{thm:converge}}
The iterative method of \algoref{alg:solver} is locally convergent whenever $0 < \alpha < 2$.
\end{customthm}
\begin{proof}
    Let $\bfz$ be any value in the range of $f(\bfx)$ and $g(\bfx; \alpha, \bfz) \triangleq \bfx - \alpha \operatorname{diag}(J_f(\bfx))^{-1}[f(\bfx) - \bfz]$, where $\operatorname{diag}(A)^{-1}$ denotes a diagonal matrix whose diagonal entries are the reciprocals of those of $A$. The iterative method of \algoref{alg:solver} can be written as $\bfx_t = g(\bfx_{t-1}; \alpha, \bfz)$. Because of \thmref{thm:invert}, there exists a unique $\bfx^* \in \mbb{R}^D$ such that $f(\bfx^*) = \bfz$, in which case $g(\bfx^*; \alpha, \bfz) = \bfx^*$. Applying the product rule, we have
    \begin{align*}
        J_g(\bfx^*; \alpha, \bfz) = I - \alpha \operatorname{diag}(J_f(\bfx^*))^{-1} J_f(\bfx^*),
    \end{align*}
    where $J_g(\bfx^*; \alpha, \bfz)$ denotes the Jacobian of $g(\bfx; \alpha, \bfz)$ evaluated at $\bfx^*$. Since $J_f(\bfx^*)$ is triangular, $J_g(\bfx^*;\alpha,\bfz)$ will also be triangular. Therefore, the only eigenvalue of $J_g(\bfx^*; \alpha, \bfz)$ is $1 - \alpha$, due to the fact that the only solution to the equation system $\operatorname{det}(\lambda I - J_g(\bfx^*; \alpha, \bfz)) = (\lambda- 1 + \alpha)^D = 0$ is $\lambda = 1 - \alpha$. Since $0 < \alpha < 2$, the spectral radius of $J_g(\bfx^*; \alpha, \bfz)$ satisfies $\rho(J_g(\bfx^*; \alpha, \bfz)) = |1 - \alpha| < 1$. Then the Ostrowski Theorem (\cf, Theorem 10.1.3. in \cite{ortega1970iterative}) shows that the sequence $\{ \bfx_1, \bfx_2, \cdots, \bfx_t \}$ obtained by $\bfx_t = g(\bfx_{t-1}; \alpha, \bfz)$ converges locally to $\bfx^*$ as $t \rightarrow \infty$.
\end{proof}

\section{Masked convolutions}\label{app:masked_conv}
Convolution is a special type of linear transformation that proves to be very effective for image data. The basic invertible module can be implemented using masked convolutions (\eg, causal convolutions in PixelCNN~\cite{oord2016pixel}). Consider a 2D convolutional layer with $C_\text{in}$ input feature maps, $C_\text{out}$ filters, a kernel size of $R \times R$ and a zero-padding of $\lfloor \nicefrac{R}{2} \rfloor$. We assume $R$ is an odd integer and $C_\text{out} = C_\text{in}$ so that the input and output of the convolutional layer have the same shape. Let $\bfW \in \mbb{R}^{C_\text{out} \times C_\text{in} \times R \times R}$ be the weight tensor of this layer. We define a mask $\mbf{M}\in\{0,1\}^{C_\text{out} \times C_\text{in} \times R \times R}$ that satisfies
\begin{align}
    \mbf{M}[i, j, m, n] = \begin{cases}
    0, &\quad \text{if $i < j$ or $i = j \wedge m > \lfloor \nicefrac{R}{2} \rfloor $ or $i = j \wedge m = \lfloor \nicefrac{R}{2} \rfloor \wedge n > \lfloor \nicefrac{R}{2} \rfloor$,}\\
    1, &\quad \text{Otherwise}.
    \end{cases}\label{eqn:mask}
\end{align}
The masked convolution then uses $\bfM \odot \bfW$ as the weight tensor. In \figref{fig:masked_conv}, we provide an illustration on a $3\times3$ masked convolution with $3$ filters.

In MintNet, $\frakm(\bfx)$ is efficiently implemented with 3 masked convolutional layers. The weights and masks are denoted as $(\bfW^1, \bfM^1)$, $(\bfW^2, \bfM^2)$ and $(\bfW^3, \bfM^3)$, which separately correspond to $\{\bfW^1_i\}_{i=1}^K, \{\bfW^2_{ij}\}_{1\leq i,j\leq K}, \{\bfW^3_j\}_{j=1}^K$ in \eqnref{eqn:block}. Let $C$ be the number of input feature maps, and suppose the kernel size is $R \times R$. The shapes of $\bfW^1$, $\bfW^2$ and $\bfW^3$ are respectively $(K C, C, R, R)$, $(K C, K C, R, R)$ and $(C, K C, R, R)$. The masks of them are simple concatenations of copies of the mask in \eqnref{eqn:mask}. For instance, $\bfM^1$ consists of $K$ copies of \eqnref{eqn:mask}, and $\bfM^2$ consists of $K \times K$ copies. Using masked convolutions, $\frakm(\bfx)$ can be concisely written as
\begin{align}
    \resizebox{0.93\textwidth}{!}{$\frakm(\bfx) = \bft \odot \bfx + (\bfW^3 \odot \bfM^3) \circledast h\bigg( (\bfW^2 \odot \bfM^2) \circledast h\big( (\bfW^1 \odot \bfM^1) \circledast \bfx + \mbf{b}^1\big) + \mbf{b}^2 \bigg) + \mbf{b}^3$}, \label{eqn:conv_block}
\end{align}
where $\bfb^1, \bfb^2, \bfb^3$ are biases, and $\circledast$ denotes the operation of discrete 2D convolution.

\section{Interpolation of hidden representations}
\label{app:interpolation}
\begin{figure}[!t]
\centering
\includegraphics[width=0.3\textwidth]{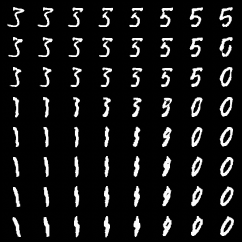}
\includegraphics[width=0.3\textwidth]{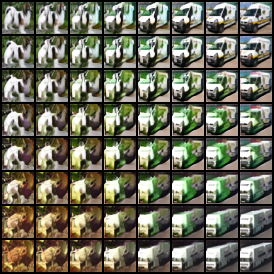}
\includegraphics[width=0.3\textwidth]{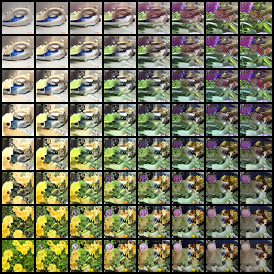}
\caption{MintNet interpolation of hidden representation. \textbf{Left:} MintNet MNIST latent space interpolation. \textbf{Middle:} MintNet CIFAR-10 latent space interpolation. \textbf{Right:} MintNet ImageNet 32$\times$32 latent space interpolation.}
\label{fig:interpolation}
\end{figure}
Given four images $\bfx_1,\bfx_2,\bfx_3,\bfx_4$ in the dataset, let $\bfz_i=f(\bfx_i)$, where $i=1,2,3,4$, be the corresponding features in the feature domain. Similar to \cite{nvp}, in the feature domain, we define
\begin{equation}
    \bfz=\cos(\phi)(\cos(\phi')\bfz_1+\sin(\phi')\bfz_2)+\sin(\phi)(\cos(\phi')\bfz_3+\sin(\phi')\bfz_4)
\end{equation}
where $\bfx$-axis corresponds to $\phi'$, $\bfy$-axis corresponds to $\phi$, and both $\phi$ and $\phi'$ range over $\{0,\frac{\pi}{14},...,\frac{7\pi}{14}\}$. We then transform $\bfz$ back to the image domain by taking $f^{-1}(\bfz)$. Interpolation results are shown in \figref{fig:interpolation}.

\section{Experiment setup and network architecture}
\label{app:network}
\paragraph{Hyperparameter tuning and computation infrastructure.} We use the standard train/test split of MNIST, CelebA and CIFAR-10. We tune our models by observing its training bpd. For density estimation on CIFAR-10 and ImageNet 32$\times$32, the models were run on two Titan XP GPUs. In other cases the model was run on one Titan XP GPU.

\paragraph{Classification setup.} Following \cite{i-resnet}, we pad the images to 16 channels with zeros. This corresponds to the first convolution in ResNet which increases the number of channels to 16. Both ResNet and our MintNet are trained with AMSGrad~\cite{reddi2018on} for 200 epochs with the cosine learning rate schedule~\cite{loshchilov2016sgdr} and an initial learning rate of 0.001. Both networks use a batch size of 128.

\paragraph{Classification architecture.} The ResNet contains 38 pre-activation residual blocks~\cite{he2016identity}, and each block has three $3\times 3$ convolutions. The architecture is divided into 3 stages, with 16, 64 and 256 filters respectively. Our MintNet uses 19 grouped invertible layers, which include a total of 38 residual invertible layers, each having three $3\times 3$ convolutions. Batch normalization is applied before each invertible layer. Note that batch normalization does not affect the invertibility of our network, because during test time it uses fixed running average and standard deviation and is an invertible operation. We use 2 squeezing blocks at the same position where ResNet applies subsampling, and matches the number of filters used in ResNet. To produce the logits for classification, both MintNet and ResNet first apply global average pooling and then use a fully connected layer (see \tabref{tab:classification}).

\paragraph{Density estimation setup.}
We mostly follow the settings in \cite{maf}. All training images are dequantized and transformed using the logit transformation. Networks are trained using AMSGrad~\cite{amsgrad}. On MNIST, we decay the learning rate by a factor of 10 at the 250th and 350th epoch, and train for 400 epochs. On CIFAR-10, we train with cosine learning rate decay for a total of 200 epochs. On ImageNet 32$\times$32, we train with cosine learning rate decay for a total of 350k steps. All initial learning rates are 0.001. 

\paragraph{Density estimation architecture.}
For density estimation on MNIST, we use 20 paired Mint layers with 45 filters each. For both CIFAR-10 and ImageNet 32$\times$32, we use 21 paired Mint layers, each of which has 255 filters. For all the three datasets, two squeezing operations are used and are distributed evenly across the network  (see \tabref{tab:mnist_density} and \tabref{tab:cifar_imagenet_density}). 

\paragraph{Tuning the step size for sampling.}
We perform grid search to find hyperparamter $\alpha$ for \algoref{alg:solver} using a minibatch of 128 images. More specifically, we start from $\alpha=1$ to 5 with a step size 0.5 for MNIST, CIFAR-10, and ImageNet 32$\times$32, and compute the normalized $L_2$ reconstruction error with respect to the number of iterations. The normalized $L_2$ error is defined as $\norm{\bfx - \bfy}_2^2 / D$, where $\bfx \in \mbb{R}^D$ and $\bfy\in\mbb{R}^D$ are two image vectors corresponding to the original and reconstructed images. We find that the algorithm converges most quickly when $\alpha$ is in intervals $[3,4]$, $[1,2]$ and $[1,2]$ for MNIST, CIFAR-10 and ImageNet 32$\times$32 respectively. Then we perform a second round grid search on the corresponding interval with a step size 0.05. In this case, we are able to find the best $\alpha$, that is $\alpha=3.5,1.1,1.15$ for the corresponding datasets.

\paragraph{Verification of invertibility.}
To verify the invertibility of MintNet, we study the normalized $L_2$ reconstruction error for MNIST, CIFAR-10 and ImageNet 32$\times$32. The $L_2$ reconstruction error is computed for 128 images on all three datasets. We plot the exponential of the mean log reconstruction errors in \figref{fig:invert}. The shaded area corresponds to the exponential of the standard deviation of log reconstruction errors.

\begin{table}
	\small
	\centering
	\caption{MintNet image classification network architecture.}
	\begin{tabular}{l|c|c}
		\hline\bigstrut
		\bf Name & \bf Configuration & \bf Replicate Block \\
		\hline\bigstrut
		\multirow{12}{*}{\shortstack{Paired Mint Block1\\ with Batch Normalization}}
		& batch normalization\\
		& $3\times 3$ lower triangular masked convolution, 1 filter
		& \multirow{10}{*}{$\times 6$}\\
		& leaky relu activation& \\
		& $3\times 3$ lower triangular masked convolution, $1$ filter\\
		& leaky relu activation& \\
		& $3\times 3$ lower triangular masked convolution, $1$ filter \\
		& batch normalization\\
		& $3\times 3$ upper triangular masked convolution,$1$ filter \\
		& leaky relu activation& \\
		& $3\times 3$ upper triangular masked convolution, $1$ filter \\
		& leaky relu activation& \\
		& $3\times 3$ upper triangular masked convolution, $1$ filter \\
		\hline\bigstrut
		Squeezing Layer & 
		$2 \times 2$ squeezing layer & {---} \\
		\hline\bigstrut
		\multirow{12}{*}{\shortstack{Paired Mint Block2\\ with Batch Normalization}} 
		& batch normalization\\
		& $3\times 3$ lower triangular masked convolution, 1 filter
		& \multirow{10}{*}{$\times 6$}\\
		& leaky relu activation& \\
		& $3\times 3$ lower triangular masked convolution, $1$ filter\\
		& leaky relu activation& \\
		& $3\times 3$ lower triangular masked convolution, $1$ filter \\
		& batch normalization\\
		& $3\times 3$ upper triangular masked convolution,$1$ filter \\
		& leaky relu activation& \\
		& $3\times 3$ upper triangular masked convolution, $1$ filter \\
		& leaky relu activation& \\
		& $3\times 3$ upper triangular masked convolution, $1$ filter \\
		\hline\bigstrut
		Squeezing Layer & 
		$2 \times 2$ squeezing layer & {---} \\
		\hline\bigstrut
		\multirow{12}{*}{\shortstack{Paired Mint Block3\\ with Batch Normalization}} 
		& batch normalization\\
		& $3\times 3$ lower triangular masked convolution, 1 filter
		& \multirow{10}{*}{$\times 7$}\\
		& leaky relu activation& \\
		& $3\times 3$ lower triangular masked convolution, $1$ filter\\
		& leaky relu activation& \\
		& $3\times 3$ lower triangular masked convolution, $1$ filter \\
		& batch normalization\\
		& $3\times 3$ upper triangular masked convolution,$1$ filter \\
		& leaky relu activation& \\
		& $3\times 3$ upper triangular masked convolution, $1$ filter \\
		& leaky relu activation& \\
		& $3\times 3$ upper triangular masked convolution, $1$ filter \\
		\hline\bigstrut
		\multirow{3}{*}{Output Layer}
		& average pooling
		& \multirow{3}{*}{---}\\
		& fully connected layer\\
		& softmax layer\\
		\hline
	\end{tabular}
\label{tab:classification}
\end{table}

\begin{table}
	\small
	\centering
	\caption{MintNet MNIST density estimation network architecture.}
	\begin{tabular}{l|c|c}
		\hline\bigstrut
		\bf Name & \bf Configuration & \bf Replicate Block \\
		\hline\bigstrut
		\multirow{10}{*}{Paired Mint Block1} 
		& $3\times 3$ lower triangular masked convolution, 45 filters
		& \multirow{10}{*}{$\times 6$}\\
		& elu activation& \\
		& $3\times 3$ lower triangular masked convolution, $45$ filters\\
		& elu activation& \\
		& $3\times 3$ lower triangular masked convolution, $45$ filters \\
		& $3\times 3$ upper triangular masked convolution,$45$ filters \\
		& elu activation& \\
		& $3\times 3$ upper triangular masked convolution, $45$ filters \\
		& elu activation& \\
		& $3\times 3$ upper triangular masked convolution, $45$ filters \\
		\hline\bigstrut
		Squeezing Layer & 
		$2 \times 2$ squeezing layer & {---} \\
		\hline\bigstrut
		\multirow{10}{*}{Paired Mint Block2} 
			& $3\times 3$ lower triangular masked convolution, 45 filters
		& \multirow{10}{*}{$\times 6$}\\
		& elu activation& \\
		& $3\times 3$ lower triangular masked convolution, $45$ filters\\
		& elu activation& \\
		& $3\times 3$ lower triangular masked convolution, $45$ filters \\
		& $3\times 3$ upper triangular masked convolution,$45$ filters \\
		& elu activation& \\
		& $3\times 3$ upper triangular masked convolution, $45$ filters \\
		& elu activation& \\
		& $3\times 3$ upper triangular masked convolution, $45$ filters \\
		\hline\bigstrut
		Squeezing Layer & 
		$2 \times 2$ squeezing layer & {---} \\
		\hline\bigstrut
		\multirow{10}{*}{Paired Mint Block3} 
		& $3\times 3$ lower triangular masked convolution, 45 filters
		& \multirow{10}{*}{$\times 8$}\\
		& elu activation& \\
		& $3\times 3$ lower triangular masked convolution, $45$ filters\\
		& elu activation& \\
		& $3\times 3$ lower triangular masked convolution, $45$ filters \\
		& $3\times 3$ upper triangular masked convolution,$45$ filters \\
		& elu activation& \\
		& $3\times 3$ upper triangular masked convolution, $45$ filters \\
		& elu activation& \\
		& $3\times 3$ upper triangular masked convolution, $45$ filters \\
		\hline
	\end{tabular}
\label{tab:mnist_density}
\end{table}

\begin{table}
	\small
	\centering
	\caption{MintNet CIFAR-10 and Imagenet 32$\times$32 density estimation network architecture.}
	\begin{tabular}{l|c|c}
		\hline\bigstrut
		\bf Name & \bf Configuration & \bf Replicate Block \\
		\hline\bigstrut
		\multirow{10}{*}{Paired Mint Block1} 
		& $3\times 3$ lower triangular masked convolution, 85 filters
		& \multirow{10}{*}{$\times 7$}\\
		& elu activation& \\
		& $3\times 3$ lower triangular masked convolution, $85$ filters\\
		& elu activation& \\
		& $3\times 3$ lower triangular masked convolution, $85$ filters \\
		& $3\times 3$ upper triangular masked convolution,$85$ filters \\
		& elu activation& \\
		& $3\times 3$ upper triangular masked convolution, $85$ filters \\
		& elu activation& \\
		& $3\times 3$ upper triangular masked convolution, $85$ filters \\
		\hline\bigstrut
		Squeezing Layer & 
		$2 \times 2$ squeezing layer & {---} \\
		\hline\bigstrut
		\multirow{10}{*}{Paired Mint Block2} 
			& $3\times 3$ lower triangular masked convolution, 85 filters
		& \multirow{10}{*}{$\times 7$}\\
		& elu activation& \\
		& $3\times 3$ lower triangular masked convolution, $85$ filters\\
		& elu activation& \\
		& $3\times 3$ lower triangular masked convolution, $85$ filters \\
		& $3\times 3$ upper triangular masked convolution,$85$ filters \\
		& elu activation& \\
		& $3\times 3$ upper triangular masked convolution, $85$ filters \\
		& elu activation& \\
		& $3\times 3$ upper triangular masked convolution, $85$ filters \\
		\hline\bigstrut
		Squeezing Layer & 
		$2 \times 2$ squeezing layer & {---} \\
		\hline\bigstrut
		\multirow{10}{*}{Paired Mint Block3} 
		& $3\times 3$ lower triangular masked convolution, 85 filters
		& \multirow{10}{*}{$\times 7$}\\
		& elu activation& \\
		& $3\times 3$ lower triangular masked convolution, $85$ filters\\
		& elu activation& \\
		& $3\times 3$ lower triangular masked convolution, $85$ filters \\
		& $3\times 3$ upper triangular masked convolution,$85$ filters \\
		& elu activation& \\
		& $3\times 3$ upper triangular masked convolution, $85$ filters \\
		& elu activation& \\
		& $3\times 3$ upper triangular masked convolution, $85$ filters \\
		\hline
	\end{tabular}
\label{tab:cifar_imagenet_density}
\end{table}

\clearpage
\section{Additional tables}
\label{app:tables}
\vspace*{\fill}
\FloatBarrier
\begin{table}[H]
 \caption{Comparison to some common invertible models.} \label{tab:compare}
\begin{center}
		\begin{adjustbox}{max width=\textwidth}
    \begin{tabular}{c | c c c c c c | c}
				\Xhline{2\arrayrulewidth}\bigstrut
        Property & NICE & Real-NVP & Glow & MaCow & FFJORD & i-ResNet & MintNet\\
				\Xhline{1\arrayrulewidth}\bigstrut
        Analytic Forward & \cmark & \cmark & \cmark & \cmark & \xmark & \cmark & \cmark\\
        Analytic Inverse & \cmark & \cmark & \xmark & \xmark & \xmark & \xmark & \xmark\\
        Non-volume Preserving & \xmark & \cmark & \cmark & \cmark & \cmark & \cmark & \cmark \\
        Exact Likelihood & \cmark & \cmark & \cmark & \cmark & \xmark & \xmark & \cmark\\
	    \Xhline{2\arrayrulewidth}
    \end{tabular} 
    \end{adjustbox}
\end{center}
\end{table}

\begin{table}[H]
 \caption{Sampling time for 64 samples for MintNet, i-ResNet and autoregressive method on the same model architectures. The time is evaluated on a NVIDIA TITAN Xp.} \label{tab:sample_time}
\begin{center}
    \begin{tabular}{p{5cm} c c c}
        \toprule
        Method & MNIST & CIFAR-10 & ImageNet 32$\times$32\\
        \midrule
        i-ResNet~\cite{i-resnet} (100 iterations) 
&11.56s &99.41s &92.53s\\ 
        Autoregressive (1 iteration) &63.61s &2889.64s &2860.21s\\
        \midrule
        MintNet (120 iterations) (ours) 
&12.81s  &117.83s  &120.78s\\
        \bottomrule
    \end{tabular} 
\end{center}
\end{table}
\FloatBarrier
\vfill
\clearpage
\section{More Samples}
\label{app:samples}
In this section, we provide more uncurated MintNet samples on MNIST, CIFAR-10 and ImageNet 32$\times$32.
\vspace*{\fill}
\FloatBarrier
\begin{figure}[H]
\centering
\includegraphics[width=0.70\textwidth]{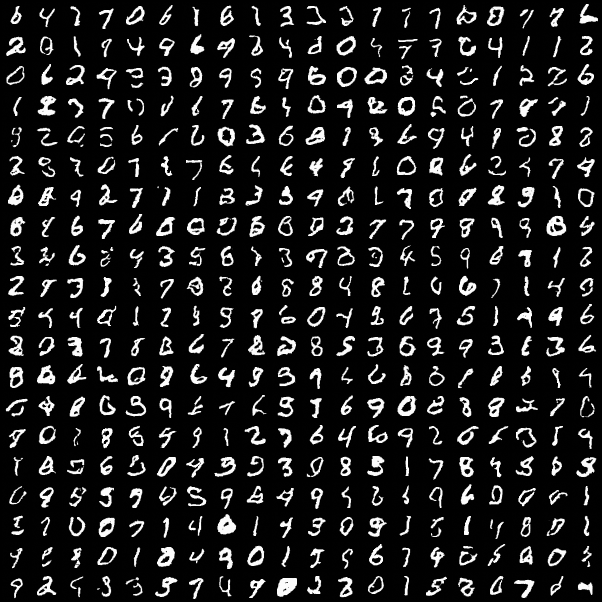}
\caption{MintNet MNIST samples.}
\label{fig:samples_mnist}
\end{figure}
\FloatBarrier
\vfill
\vspace*{\fill}
\FloatBarrier
\begin{figure}
\centering
\includegraphics[width=0.70\textwidth]{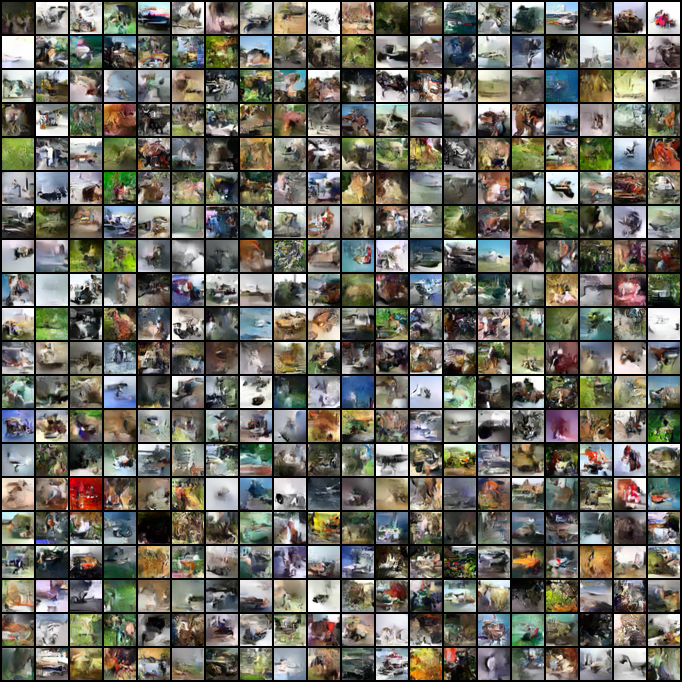}
\caption{MintNet CIFAR-10 samples.}
\label{fig:samples_cifar10}
\end{figure}

\begin{figure}
\centering
\includegraphics[width=0.7\textwidth]{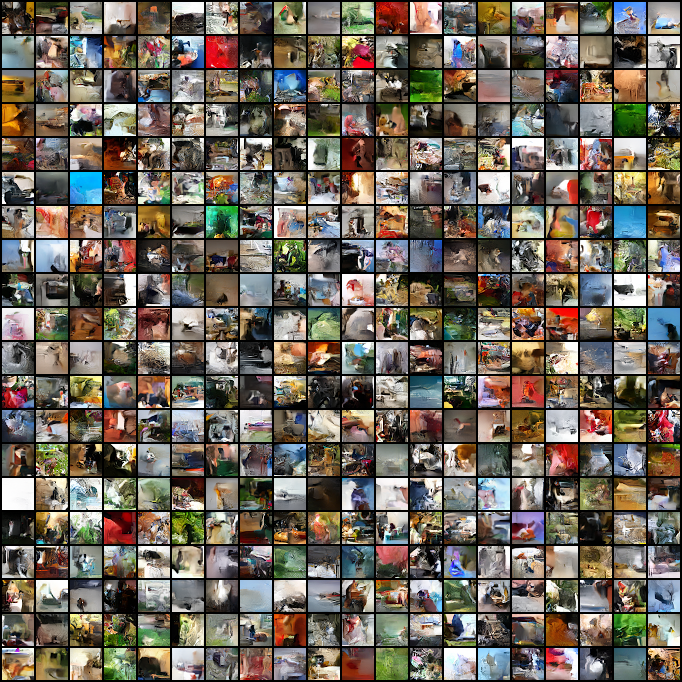}
\caption{MintNet ImageNet 32$\times$32 samples.}
\label{fig:samples_imagenet32}
\end{figure}
\FloatBarrier
\vfill
\end{document}